\documentclass{article}
\usepackage{algorithm}

\PassOptionsToPackage{square,numbers}{natbib}
\usepackage[preprint]{neurips_2026}

\usepackage[utf8]{inputenc} 
\usepackage[T1]{fontenc}    
\usepackage{hyperref}       
\usepackage{url}            
\usepackage{booktabs}       
\usepackage{amsfonts}       
\usepackage{nicefrac}       
\usepackage{microtype}      
\usepackage{xcolor}         

\usepackage{wrapfig}
\usepackage{microtype}
\usepackage{graphicx}
\usepackage{subcaption}
\usepackage{booktabs} 
\usepackage{algorithmic}
\usepackage{bbm}

\usepackage{amsmath}
\usepackage{amssymb}
\usepackage{mathtools}
\usepackage{amsthm}

\usepackage[capitalize,noabbrev]{cleveref}

\theoremstyle{plain}
\newtheorem{theorem}{Theorem}[section]

\newtheorem{corollary}[theorem]{Corollary}
\theoremstyle{definition}

\newtheorem{assumption}[theorem]{Assumption}
\theoremstyle{remark}

\newcommand{\argmax}{\arg\max}

\usepackage[textsize=tiny]{todonotes}

\usepackage[inline]{enumitem}

\title{Provably Reduced Sample Cost in Prior-Guided Hyperparameter Optimization}

\author{%
  Leona Hennig \\
  Institute of Artificial Intelligence \\
  Leibniz University Hanover, Germany \\
  \AND
  Jasmin Brandt \\
  University of Bielefeld \\
  Bielefeld, Germany \\
  \AND
  Lukas Fehring \\
  Institute of Artificial Intelligence \\
  Leibniz University Hanover, Germany \\
  \AND
  Barbara Hammer \\
  University of Bielefeld \\
  Bielefeld, Germany \\
  \AND
  Marius Lindauer \\
  Institute of Artificial Intelligence, Leibniz University Hanover \& L3S Research Center \\
  Hanover, Germany \\
  \AND
  Marcel Wever \\
  Institute of Artificial Intelligence, Leibniz University Hanover \& L3S Research Center \\
  Hanover, Germany \\
}

\begin{document}

\maketitle

\begin{abstract}
Large-scale hyperparameter optimization (HPO) in automated machine learning (AutoML) consumes substantial computational resources, raising concerns about scalability and energy efficiency. While existing methods use prior information heuristically to accelerate black-box and multi-fidelity settings, they lack a quantitative characterization of how prior informativeness reduces sample complexity.
We provide the first distribution-dependent sample complexity bounds for multi-fidelity HPO with priors via fixed-budget best-arm identification. By modeling priors directly over arm means as configuration performance, we derive explicit, distribution-dependent error bounds that quantify the relationship between prior quality and evaluation budget. Our analysis shows that informative priors yield provable reductions in the number of required evaluations; crucially, baseline performance is recovered, without degradation, with uninformative or misleading priors, establishing a robustness guarantee absent from prior heuristic approaches.
We conduct proof-of-concept experiments on a synthetic benchmark and on LCBench to confirm our theoretical results, achieving up to 80\% budget reduction while retaining solution quality. Together, our results provide a principled theoretical foundation for understanding how prior knowledge governs computational efficiency for green AutoML.
\end{abstract}

\section{Introduction}\label{sec:introduction}
While automated machine learning (AutoML), specifically hyperparameter optimization (HPO), is essential to modern machine learning pipelines \citep{hutter-book19a, eggensperger-neuripsdbt21a, muller-icml23a}, its computational cost is substantial. State-of-the-art HPO methods often require hundreds of model evaluations, driving up the costs as model sizes and dataset volumes scale. Likewise, the resulting energy consumption increasingly challenges the scalability and sustainability of machine learning systems. Consequently, a primary goal of Green AutoML \citep{schwartz-arxiv19a,tornede-jair23a} is to reduce this evaluation burden without sacrificing performance. 

A way to improve the efficiency of HPO is to leverage prior knowledge about promising configurations. Such priors may arise from meta-learning across tasks~\citep{feurer-aaai15a, rijn-kdd18a, perrone-neurips19a,li2026lamda, rothfuss2021pacoh}, expert knowledge~\citep{hvarfner-iclr22a,mallik-metaws22a, seng-automlconf25a}, or structural properties of learning algorithms~\citep{klein-iclr17a}. In fact, many practical AutoML systems already incorporate prior knowledge, either explicitly \citep{feurer-aaai15a,feurer-jmlr22a} or implicitly \citep{erickson-arxiv20a,mohr-ml23a}. 

Despite widespread use, existing approaches leverage priors primarily as heuristic accelerators. In particular, no previous work provides a principled characterization of how the informativeness of a prior translates into reductions in evaluation budgets for HPO for multi-fidelity settings. One example is PriorBand \citep{mallik-neurips23a}, which accelerates HPO through 
prior-guided sampling.  It remains unclear when and to what extent prior knowledge can provably improve the sample-complexity bounds of (near-)optimal configuration identification. 

Equally absent is any robustness guarantee.
It is unknown whether incorporating a misleading prior can degrade identification performance relative to the prior-free baseline. Our work fills this gap by providing the first distribution-dependent sample complexity bounds for prior-guided multi-fidelity HPO, yielding 
actionable guarantees on the minimum prior quality required for budget 
savings and formal robustness certificates against misleading priors.

We address this theoretical gap from the perspective of fixed-budget Best-Arm Identification (BAI) in stochastic multi-armed bandits (MABs)~\citep{audibert2010best}. We model a BAI problem in which configurations correspond to arms with unknown (full-fidelity) mean performance, observed through noisy, low-fidelity evaluations. Unlike prior-free analyses, we explicitly incorporate prior distributions over arm means and study their effect on identification accuracy.

Our key finding is that prior quality provably governs sample complexity. Informative priors cause exponential suppression of suboptimal arms, accelerating the decline in misidentification probability and significantly reducing the required evaluation budget. Conversely, we prove that uninformative or misleading priors cannot degrade performance below the prior-free baseline, a formal robustness guarantee that distinguishes our approach from heuristic prior integration. This establishes a rigorous characterization of the relationship between prior informativeness and computational efficiency.


Building on the Bayesian fixed-budget analysis of \citet{atsidakou-arxiv22a}, we derive distribution-dependent error bounds for identifying the $\epsilon$-best arm on a fixed budget. This objective is particularly well suited to HPO, where many configurations often yield near-identical performance, making it unnecessary and costly to identify the exact optimum.
Our analysis reveals that the misidentification probability can be decomposed into a \emph{prior effect}, determined by the gap between ground truth and prior, and a \emph{sampling effect}, determined by the evidence from evaluations. This decomposition provides explicit guarantees that quantify how informative priors reduce sample complexity, and formally certifies that performance cannot fall below the prior-free baseline regardless of prior quality.

We extend our framework to multi-fidelity optimization, which is essential for HPO with expensive deep learning models \citep{jamieson-aistats16a, li-iclr17a, falkner-ai18a}. Partial training runs provide low-cost but noisy signals about full-budget performance. We employ Gaussian processes (GPs) as probabilistic estimators to extrapolate learning curves across fidelities and integrate them into a Successive Halving (SH) framework guided by priors \citep{jamieson-aistats16a}, enabling theoretically grounded early stopping, while keeping control over uncertainty. The uncertainty introduced by GP-based 
extrapolation across fidelities is formally incorporated into our 
theoretical framework in Theorem~\ref{theorem worst case}.

Empirical results on a synthetic and a common real-world HPO benchmark confirm our theoretical findings: informative priors lead to significant reductions in the evaluation budget through early termination of HPO in line with our theoretical predictions, and energy consumption, while misleading priors do not impair performance as guaranteed by our bounds. We show compatibility with Hyperband\cite{li-iclr17a} and PriorBand~\cite{mallik-neurips23a}. Together, our results provide the first principled characterization of how prior knowledge influences evaluation efficiency in multi-fidelity HPO, establishing a theoretical foundation for prior-guided and green AutoML.

\paragraph{Contributions}
Our contributions are threefold:
\begin{enumerate}
    \item We extend fixed-budget Best-Arm Identification analysis with priors on arm means and derive explicit bounds on sample complexity that characterize the relationship between prior quality and evaluation budget, with a formal robustness guarantee against misleading priors.
    \item We propose Prior-Guided Successive Halving (PSH; Algorithm \ref{alg:mfpsh_short}), which applies our theoretical framework to HPO by using prior information to estimate full-budget reward from partial observations, significantly reducing costs in resource-constrained settings.
    \item We showcase empirically that PSH achieves comparable or superior performance with significantly reduced computational budgets, contributing to greener AutoML.
\end{enumerate}

\section{Related Work}\label{sec:related-work}

This paper builds on and connects three research threads: \begin{enumerate*}[label=(\roman*)]
    \item bandit-based Best-Arm Identification for HPO,
    \item methods that exploit priors on good configurations or optima, and
    \item work on green AutoML and energy-aware evaluation.
\end{enumerate*}

\subsection{Bandit-Based Best-Arm Identification and HPO}
A major research line views HPO as adaptive resource allocation. Hyperband \citep{li-iclr17a} popularized this by combining re-iterated random sampling with SH \citep{jamieson-aistats16a} to efficiently allocate budgets and leverage early-stopping. Related MAB formulations also appear in the more general setting of algorithm configuration \citep{schede-jair22a}, e.g., AC-Band \citep{brandt-aaai23a}, which adopts a combinatorial bandit approach with theoretical guarantees and strong empirical performance in online settings.
The theoretical approach closest to ours is Bayesian fixed-budget BAI \citep{atsidakou-arxiv22a}, proposing a Bayesian elimination strategy and deriving distribution-dependent bounds for misidentification. We quantify how prior quality affects evaluation budgets and establish robustness against non-informative priors and the uncertainty induced by the GP in an $\epsilon$-best arm scenario. \citet{brandt-ijcai24a} examine SH with retroactively increased budgets, offering theoretical guarantees and improved computational efficiency, whereas PaSHA \citep{bohdal-iclr23a} increases the highest evalauted fidelity until configuration rankings stabilize.

\subsection{Priors on Optima for Faster Optimization}

A second line of research accelerates HPO by leveraging prior knowledge, such as expert priors and meta-learning signals. \citet{perrone-neurips19a} use historical data to shrink or shape search spaces, though without theoretical guarantees. Other methods incorporate explicit priors about promising configurations: \citet{souza-ecml21a, hvarfner-iclr22a} augment acquisition functions with user-defined priors and offer regret guarantees and empirical accelerations, while \citet{mallik-neurips23a} apply this idea to Hyperband \citep{li-iclr17a} without providing theoretical analysis. More recently, \citet{seng-automlconf25a} proposed interactive optimization via runtime priors, though without support for multi-fidelity evaluation. Complementary to these approaches, \citet{li2026lamda} learn data-driven priors from low-fidelity training signals to accelerate high-fidelity evaluations with a focus on improving anytime performance rather than fixed-budget guarantees. In contrast, we characterize how prior informativeness affects the budget for $\epsilon$-Best Arm Identification.
While PriorBand \citep{mallik-neurips23a} extends Hyperband using a dynamic strategy interpolating between prior-guided and random sampling, it lacks theoretical guarantees on how prior quality impacts evaluation costs. Our approach is therefore complementary; we evaluate PriorBand as an empirical baseline in Section~\ref{sec:emipirical_results}.

\subsection{Green AutoML and Energy-Aware Optimization}
Green AutoML has emerged as a response to the growing resource footprint of AutoML systems. \citet{tornede-jair23a} articulate the Green AutoML perspective, discussing how to quantify AutoML footprints and outlining strategies and checklists for sustainable AutoML systems and research. Complementary work studies energy metrics and energy-aware evaluation practices in AutoML and HPO. For example, \citep{castellanosnieves-applsci23a,castellanosnieves-applsci24a, giovanelli-aaai24a} investigate energy-efficiency metrics and strategies to make AutoML/HPO more sustainable. Related Green AI efforts quantify carbon impacts in ML workflows such as fine-tuning \citep{ordoumpozanis-3ict24a}.
Our work adds to this literature by providing a principled theoretical account of how prior knowledge can robustly reduce evaluation budgets.

\section{Background}
This section reviews the foundations of hyperparameter optimization (HPO), formalizes the Best-Arm Identification (BAI) setting, and motivates the use of Bayesian priors in multi-fidelity settings.

\paragraph{Hyperparameter Optimization}
The increasing complexity of modern machine learning (ML) approaches has amplified the need for HPO to reliably obtain high-performing models \citep{bischl-dmkd23a}.
Let $\Lambda\subseteq\mathbb{R}^d$ be a $d$-dimensional hyperparameter space. 
Let $\mathcal{D} = \{(x_i, y_i)\}_i \subset \mathcal{X} \times \mathcal{Y}$ be a dataset, which is split into
disjoint training, validation, and test sets $\mathcal{D}_{\mathrm{train}}, \mathcal{D}_{\mathrm{val}}, \mathcal{D}_{\mathrm{test}}$.
A learning algorithm $A : \mathcal{D}_{\mathrm{train}} \times \Lambda \to \mathcal{M}$ maps a configuration $\lambda \in \Lambda$ to a trained model $M_\lambda = A(\mathcal{D}_{\mathrm{train}}, \lambda)$ from a model class $\mathcal{M}$.
The performance of $M_\lambda$ is assessed via a loss function $\mathcal{L}(M_\lambda,\, \mathcal{D}_{\mathrm{val}})$.
The HPO objective is to identify a configuration minimizing the validation loss:
$
    \lambda^* \in \operatorname*{argmin}_{\lambda \in \Lambda}
  \mathcal{L}\!\bigl(A(\mathcal{D}_{\mathrm{train}}, \lambda),\, \mathcal{D}_{\mathrm{val}}\bigr).
$
Each configuration corresponds to a candidate ML model whose performance is revealed only through noisy and computationally expensive evaluations. Let $\mathcal{K}$ denote a finite set of configurations; evaluating a configuration produces a stochastic observation of its performance, typically obtained by training a model and validating it on $\mathcal{D}_{\mathrm{val}}$.

\paragraph{Multi-Fidelity Optimization}
In practice, applications often impose evaluation budgets.
Thus, candidate configurations are often evaluated in a multi-fidelity (MF) 
scheme.
Let $\mathcal{F}$ denote a set of fidelity levels, meaning different resource levels, such as training epochs or data subsampling.
$b \in [0, B]$ denotes the fidelity.

For any configuration $\lambda\in\Lambda$, lower fidelity evaluations $f_b(\lambda)$ provide cheaper but noisier estimates of the full budget objective $F(\lambda) = f_B(\lambda)$.
Algorithms like Successive Halving (SH)~\citep{jamieson-aistats16a} and Hyperband~\citep{li-iclr17a} exploit this structure by evaluating many configurations at low fidelity and allocating higher budgets to the most promising ones.

\paragraph{Best-Arm Identification}
This objective is naturally captured by the Best-Arm Identification (BAI) framework in stochastic bandits, where configurations are modeled as arms with unknown mean rewards. Unlike regret minimization, BAI focuses on identifying the single best (or $\epsilon$-best) arm under a fixed evaluation budget, directly reflecting the deployment-oriented nature of HPO.
Bandit-based HPO methods, most notably based on Successive Halving, such as Hyperband \citep{li-iclr17a}, exploit multi-fidelity evaluations to eliminate suboptimal configurations early when transitioning between fidelities and concentrate resources on promising candidates. However, these methods typically do not account for prior knowledge about configuration quality.

\paragraph{Including Prior Knowledge}
In many cases, prior information about promising configurations is available from previous tasks or from expert knowledge, which can be used to improve anytime performance \citep{mallik-neurips23a}. This motivates prior-guided BAI, where priors over arm mean rewards are incorporated into the sampling and elimination process. Subsequently, we formalize this setting and analyze how prior quality affects the evaluation budget required to identify an $\epsilon$-best configuration.

\section{Problem Setting}\label{sec:problem-setting}
We formalize the hyperparameter optimization (HPO) task as a Bayesian multi-fidelity (MF) Best-Arm Identification (BAI) problem, where configurations are treated as stochastic arms with latent learning curve properties.

\textbf{Bandit Model}
Let $\mathcal{K} = \{1, \dots, K\}$ be a set of $K$ arms. Associated with each arm $i\in\mathcal{K}$ is an unknown learning curve function $L_i: [0, B] \to \mathbb{R}$, where $b \in [0, B]$ denotes the fidelity and $L_i(B)$ represents the final performance on the full budget.
We define the true mean of arm $i$ as its value at maximum fidelity, \mbox{$\mu_i \coloneqq L_i(B)$}.
At each time step $t=1, \dots, N$, the learner selects an arm $i_t$ and a fidelity $b_t$ to observe a noisy sample \mbox{$y_{t} = L_{i_t}(b_t) + \eta_t$}, where \mbox{$\eta_t \sim \mathcal{N}(0, \sigma_{i_t}^2)$} is independent Gaussian noise. We denote by $\Sigma$ the GP posterior predictive variance at target fidelity $B$, 
updated dynamically each round.
The goal is to recommend the optimal arm $j^*$ that maximizes the true mean: 
\begin{equation}
    j^* \in \operatorname{argmax}_{k \in \mathcal{K}} \mu_k.
\end{equation}

\textbf{Bayesian Priors and Multi-Fidelity Update}
For each arm in $\mathcal{K}$, we assume that the user injects prior knowledge via a Gaussian Process (GP) belief $\mathcal{H}_0$ over the predicted performance at target fidelity $B$.
At initialization, this induces a marginal belief 
$    \mu_i \sim \mathcal{N}(\nu_{i}, \sigma_{0}^2).$
with $\nu_i\in[0,1]$ and $\sigma_0> 0$. A smaller $\sigma_0$ indicates high expert certainty, whereas a larger value places more emphasis on the GP-based observations.

\textbf{Objective ($\epsilon$-Best Arm Identification)}
Standard fixed-budget BAI formulations typically seek to minimize the misidentification probability $\mathbb{P}(J \neq j^*)$ \citep{audibert2010best} for an estimated best arm $j$ by the considered BAI method. However, this worst-case perspective is dominated by instances with infinitesimally small gaps \citep{even2002pac}. Searching for the single best configuration in an infinite configuration space equals searching for a needle in a haystack. 
We argue that distinguishing between configurations with performance differences smaller than a tolerance $\epsilon$ yields negligible practical utility. Given the inherent noise in performance estimates, attempting to resolve differences below this threshold risks fitting the noise rather than the signal, instead of improving generalization.
Therefore, we adopt the $\epsilon$-BAI objective. We treat the selection of any arm $j$ as a success provided that $\mu_{j^*} - \mu_j \le \epsilon$.

To derive our stopping condition, we first analyze the expected Bayesian risk of the algorithm under a fixed budget allocation $N$:
\begin{equation}
\ \mathbb{E}_{{\mu} \sim \mathcal{H}_0}\left[ \mathbb{I}(\mu_{j^*} - \mu_J > \epsilon) \right].
\end{equation}

\paragraph{Remark (Fixed-Budget vs.\ Fixed-Confidence)} Our framework is 
set in the fixed-budget BAI setting: PSH operates under a hard resource cap $N$. The early stopping rule introduces a fixed-confidence aspect in the sense that termination is triggered once the combined evidence of prior and observations meets a confidence threshold for $\epsilon$-BAI. Crucially, however, this threshold is derived from the fixed-budget bound itself (Theorem~\ref{theorem worst case}), so PSH defaults to standard SH if the threshold is not met, guaranteeing that 
resource consumption is upper-bounded by $N$ regardless of prior quality.

\section{Prior-Guided Successive Halving}\label{sec:prior guided sh}
We propose Prior-Guided Successive Halving (PSH), a Bayesian adaptation of the standard Successive Halving (SH) strategy \citep{jamieson-aistats16a} detailed in Algorithm \ref{alg:mf_prior_sh}. SH is constrained by a fixed, predetermined budget $N$. We treat this budget as a maximum resource cap, while leveraging prior information to enable early stopping at each fidelity. The priors are generated using different sampling mechanisms that encode different assumptions about the configuration performance. For details on the priors considered here, see Appendix \ref{sec:appendix prior generation}.

\begin{wrapfigure}{r}{0.62\textwidth}
  \vspace{-1em}
  \noindent\rule{\linewidth}{0.4pt} \\
  \refstepcounter{algorithm}
  \textbf{Algorithm 1} PSH (Simplified)
  \label{alg:mfpsh_short} \\
  \rule{\linewidth}{0.4pt}
  \begin{algorithmic}[1]
    \REQUIRE Arms $\mathcal{K}$, priors $(\nu_j,\sigma_0^2)_{j\in\mathcal{K}}$
    \STATE $S_0 \leftarrow \mathcal{K}$
    \FOR{round $r = 0,1,\dots,R-1 \:\text{with}\: R = \lceil \log_\eta(K) \rceil$}
        \STATE Evaluate each arm $j\in S_r$ for budget $n_r$;
        \\ update GP predictions $\hat{\mu}_{j,r}\:\forall j\in S_r$,
        \\ calculate $N_\text{stop}$
        \STATE Update dataset $\mathcal{D}_j \leftarrow \{(t, y_{j,t})\}_{t=1}^{N_{\text{used}_j}}$
        \STATE $j^* \leftarrow \argmax_{j \in S_r} \hat{\mu}_{j,r}$
        \IF{$N_{\mathrm{used}} \ge N_{\mathrm{stop}}$}
          \STATE \textbf{return} $j^*$
        \ENDIF
        \STATE $S_{r+1} \leftarrow$ top $\lceil |S_r|/\eta \rceil$ arms by $\hat{\mu}_{j,r}$
    \ENDFOR
    \STATE \textbf{return} $j^*$
  \end{algorithmic}
  \rule{\linewidth}{0.4pt}
  \vspace{-1em}
\end{wrapfigure}
%

The algorithm proceeds in \mbox{$R = \lceil \log_\eta K \rceil$} rounds, where $\eta \geq 2$ is the elimination rate controlling the fraction of 
arms retained per round; we set $\eta = 2$ throughout.
Let $S_r$ be the set of active arms in round $r$, with $S_0 = \{1, \dots, K\}$ and $|S_r| = \lceil K/2^r \rceil$.
In each round~$r$, the algorithm allocates an equal budget $n_r$ to all arms in $S_r$. The total budget used so far is $N_\text{used}$. The posterior belief is updated using the GP inference. The algorithm retains the top $K_{r+1} = \lceil |S_r|/2 \rceil$ arms with the highest posterior means. A simplified pseudocode of PSH can be found in Algorithm \ref{alg:mfpsh_short}. We additionally provide a detailed algorithm in Appendix \ref{sec:appendix algorithm}.

Our theoretical analysis in Section \ref{sec:theoretical-guarantees} (Theorem \ref{theorem worst case}) defines a sufficient sample complexity $N_\text{stop}$ required to guarantee $\epsilon$-accuracy given the prior $\mathcal{H}_0$. We implement this bound as a dynamic stopping condition. In every round, the algorithm computes the budget $N_{stop}$ required to distinguish the current best candidate $j^*$ from the remaining arms with probability $1-\delta$. Theorem \ref{theorem worst case} uses the current estimated gaps $\Delta_{\epsilon,j}$ and the original prior bias of the means and variances $(\nu_{j^*} - \nu_j)/\sigma^2_0$.
To account for the multi-fidelity process of the evaluations, we replace the static sample mean estimate with a trajectory-based prediction. We model the performance of each arm $j$ as a function of fidelity $t$, $f_j(t) \sim \mathcal{GP}(m_j(t), k(t, t'))$, where the mean function is initialized to the prior belief, $m_j(t) = \nu_j$. At each round $r$, conditioned on the observed partial learning curve $\mathcal{D}_j = \{(t, y_{j,t})\}_{t=1}^{N_{\text{used}_j}}$, we compute the posterior predictive distribution at the target fidelity $B$. The arms are then compared and pruned based on performances $y_{j}$. This allows the algorithm to distinguish early-saturating candidates from arms projected to achieve optimality at later stages.


\paragraph{Noise Model.} The observation variance $\Sigma$ in our theoretical analysis is the sum of the GP posterior predictive variances at the target fidelity $B$, updated dynamically at each round. The likelihood variance 
is treated as a GP hyperparameter, initialized to represent the stochasticity of the training process and estimated via marginal likelihood optimization \citep{rasmussen-book06a}. We assume a fixed training configuration in which increasing fidelity corresponds solely to increasing the number of training epochs, under which a stationary Gaussian noise model is a reasonable approximation for the observation stochasticity. In settings where fidelity changes induce significant shifts in the noise structure, e.g., data subsampling, 
a non-stationary variance schedule is required.

\section{Theoretical Guarantees}\label{sec:theoretical-guarantees}

In this section, we analyze the sample complexity and error probabilities of Prior-Guided Successive Halving (Algorithm \ref{alg:mf_prior_sh}, PSH). We first derive a bound on the expected Bayesian risk under $\epsilon$-optimality, showing how the prior $\mathcal{H}_0$ exponentially suppresses misidentification (Theorem \ref{theorem:expected probability}). Corollary \ref{cor:stopping-condition} isolates the budget requirement based on the expected risk. Crucially, we establish the worst-case sample complexity required to identify an $\epsilon$-best arm, highlighting how prior quality reduces the total budget $N$ (Theorem \ref{theorem worst case}).

\paragraph{Effective Budget Scaling}
In SH, the number of samples per arm varies by round. Let $n_r$ be the number of evaluations allocated to each surviving arm in round $r$. Given total budget~$N$, we have $n_r = \lfloor \frac{N}{R \cdot |S_r|} \rfloor$.
The posterior variance reduction is proportional to the effective sample count.
For arm $j$, define $\Delta_{\epsilon,j} := \max\{\epsilon, \mu_{j^*} - \mu_j\}$.
Adopting the proof strategy of \citet{atsidakou-arxiv22a}, we analyze $\epsilon$-BAI, incorporate multi-fidelity with GPs, derive a stopping rule, and link prior gaps to the budget.
By integrating the instance-dependent error only over the domain $\{ \boldsymbol{\mu} : \mu_{j^*} - \mu_j > \epsilon \}$, we derive an exponential error bound. The proof is provided in Appendix~\ref{subsec:appendix expected behavior}.

\begin{assumption}[GP Fidelity Approximation]\label{ass:gp}
Let $\hat{\mu}_{j,r}$ denote the GP posterior mean for arm $j$ at target fidelity $B$, conditioned on all observations collected up to round $r$, 
$\mathcal{D}_j = \{(t, y_{j,t})\}_{t=1}^{N_{\text{used}_j}}$, where $N_{\text{used}_j}$ denotes the cumulative number of observations per arm at round $r$. We assume there exists a function $\xi : \mathbb{N} \rightarrow \mathbb{R}_{>0}$ such that for all $j \in \mathcal{K}$ and all rounds $r$:
\begin{equation}
    \mathbb{P}\big(|\hat{\mu}_{j,r} - \mu_j| \leq \xi(n_r)\big) \geq 1 - \delta
\end{equation}
and $\xi(n_r) \rightarrow 0$ as $n_r \rightarrow \infty$. For both kernels used in our experiments, this holds by \citet{lederer2019uniform} 
(Theorem~3.3) under Lipschitz continuity of the kernel and the true learning curves on $[1, B]$, provided the GP posterior standard deviation 
satisfies $\sigma_{N_\text{used}}(t) \in \mathcal{O}(\log(n_r)^{-1/2 - \varepsilon})$ for some $\varepsilon > 0$. This condition is fulfilled by the linear kernel 
on LCBench at the faster parametric rate $\mathcal{O}(1/n_r)$~\cite{rasmussen-book06a}, and by the saturating exponential and RBF kernels on the synthetic benchmark at rates $\mathcal{O}(1/n_r)$ and $o(n_r^{-\alpha})$ for all $\alpha > 0$ respectively \citep{srinivas-icml10a} (see Appendix~\ref{subsection:worst case appendix}).
\end{assumption}

\begin{theorem}[$\epsilon$-Best Arm Identification]\label{theorem:expected probability}
Consider the PSH (Algorithm \ref{alg:mf_prior_sh}) with budget $N$.
Let $j^*$ denote the optimal arm. We define the error event $\mathcal{E}_\epsilon$ as returning an arm $j$ such that $\mu_{j^*} - \mu_j > \epsilon$.
The expected probability of error is bounded by summing the risk over all $R$ rounds:
\begin{align}
    \mathbb{E}[\mathbb{P}(\mathcal{E}_\epsilon \mid \boldsymbol{\mu})] \leq \sum_{r=0}^{R-1} C_{r, \epsilon} \sum_{j \in \mathcal{K}, j \neq j^*} \exp \left( - \frac{(\nu_{j^*} - \nu_j)^2}{4\sigma_0^2} \right) \exp \left( - \frac{n_r \epsilon^2}{4 \Sigma} \right)
\end{align}
\end{theorem}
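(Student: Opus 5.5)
The proof follows the Bayesian elimination analysis of \citet{atsidakou-arxiv22a}, adapted to $\epsilon$-optimality and to GP-based estimates. The first step is a union bound over the rounds of PSH. The error event $\mathcal{E}_\epsilon$ can only occur if, in some round $r$, an arm $j$ with $\mu_{j^*}-\mu_j>\epsilon$ is ranked above $j^*$ by the posterior means, so that $j^*$ is eliminated or $j$ is returned. Thus
\[
\mathbb{P}(\mathcal{E}_\epsilon\mid\boldsymbol{\mu})\le\sum_{r=0}^{R-1}\mathbb{P}\bigl(j^*\text{ eliminated in round } r\mid\boldsymbol{\mu}\bigr).
\]
Elimination of $j^*$ in round $r$ requires that at least $\lceil |S_r|/2\rceil$ arms beat it, so in particular some bad arm does. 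A second union bound over $j\neq j^*$ reduces each round's term to pairwise events $\{\hat\mu_{j,r}\ge\hat\mu_{j^*,r}\}$. Any counting factor from this step, e.g.\ Markov's inequality on the number of arms beating $j^*$, goes into $C_{r,\epsilon}$.

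Next, each pairwise event is bounded conditionally on $\boldsymbol{\mu}$. Under the Gaussian model, the GP posterior mean at fidelity $B$ is a precision-weighted combination of the prior mean $\nu_j$ and the data term. The difference $\hat\mu_{j^*,r}-\hat\mu_{j,r}$ is therefore Gaussian. Its mean is a convex combination of $\nu_{j^*}-\nu_j$ and $\mu_{j^*}-\mu_j$, and its variance is of order $\Sigma/n_r$, where $\Sigma$ is the sum of the predictive variances at $B$. Where the GP is not exactly conjugate, Assumption~\ref{ass:gp} lets us replace $\hat\mu_{j,r}$ by $\mu_j$ up to $\xi(n_r)$ on a $1-\delta$ event, and the failure probability is absorbed into $C_{r,\epsilon}$. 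A Gaussian tail bound then gives
\[
\mathbb{P}(\hat\mu_{j,r}\ge\hat\mu_{j^*,r}\mid\boldsymbol\mu)\le\exp\!\bigl(-n_r(\mu_{j^*}-\mu_j)^2/(2\Sigma)\bigr)
\]
modulo prior-shrinkage corrections.

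The third step takes the expectation over $\boldsymbol{\mu}\sim\mathcal{H}_0$, integrating only over the region $\{\mu_{j^*}-\mu_j>\epsilon\}$, since outside it the error does not count. The prior difference $\mu_{j^*}-\mu_j$ is $\mathcal{N}(\nu_{j^*}-\nu_j,2\sigma_0^2)$. Split the exponent as $\tfrac12\cdot\tfrac{n_r(\mu_{j^*}-\mu_j)^2}{\Sigma}$. On the region, half of the sampling exponent is at least $n_r\epsilon^2/(4\Sigma)$, which is pulled out of the integral. The remaining half is combined with the Gaussian prior density and completing the square. The resulting integral of the product of two Gaussians is at most a constant times $\exp(-(\nu_{j^*}-\nu_j)^2/(4\sigma_0^2))$: the convolution has variance at least $2\sigma_0^2$, and we drop the $\Sigma/n_r$ contribution in the favourable direction. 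The normalizing constants, which depend on $\sigma_0,\Sigma,n_r,\epsilon$, make up $C_{r,\epsilon}$. Summing over $j$ and $r$ gives the stated bound.

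The main obstacle is this last integration. The sign of $\nu_{j^*}-\nu_j$ is not controlled, and ``$j^*$'' is itself random under the prior. Completing the square gives a clean factor $\exp(-(\nu_{j^*}-\nu_j)^2/(4\sigma_0^2))$ only after a lossy step: either bounding the restricted integral by the full Gaussian convolution, or conditioning on the identity of the optimal arm and treating it as fixed, as \citet{atsidakou-arxiv22a} do. My first attempt is the full-convolution bound, with all remaining prefactors absorbed into $C_{r,\epsilon}$. If that gives an exponent of $1/(2\cdot 2\sigma_0^2)$ only after some slack, I would use the factor $1/2$ split described above to reach exactly $4\sigma_0^2$ and $4\Sigma$.
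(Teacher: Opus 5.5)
Your outline has the right shape: a union bound over rounds and arms, a pairwise tail bound for fixed $\boldsymbol{\mu}$, then integration of $\Delta=\mu_{j^*}-\mu_j\sim\mathcal{N}(\nu_{ij},2\sigma_0^2)$ over $\{\Delta>\epsilon\}$, with $\nu_{ij}=\nu_{j^*}-\nu_j$. The step that is supposed to produce the prior factor, however, fails. You discard the prior-shrinkage part of the pairwise bound, so after pulling out $e^{-n_r\epsilon^2/(4\Sigma)}$ you are left with
\[
\frac{1}{\sqrt{4\pi\sigma_0^2}}\int_{\epsilon}^{\infty} e^{-\frac{n_r\Delta^2}{4\Sigma}}\, e^{-\frac{(\Delta-\nu_{ij})^2}{4\sigma_0^2}}\,d\Delta .
\]
Completing the square, for large $\nu_{ij}$ (combined Gaussian centred inside $\{\Delta>\epsilon\}$) this is at least a constant times $\exp\bigl(-\nu_{ij}^2/(4\sigma_0^2+4\Sigma/n_r)\bigr)$.

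Your remark that the convolution variance is at least $2\sigma_0^2$ is a \emph{lower} bound on the variance, hence a lower bound on this factor. Dropping $\Sigma/n_r$ is therefore the unfavourable direction for an upper bound. The ratio to $\exp(-\nu_{ij}^2/(4\sigma_0^2))$ is unbounded as $\nu_{ij}\to\infty$, so no constant rescues it, let alone the specific $C_{r,\epsilon}=1/(2A_r\epsilon\sqrt{4\pi\sigma_0^2})$. The half-split does not help, and the failure sits exactly in the informative regime the theorem is about. The corrections you set aside are also not lower order. For $\nu_{ij}<0$, shrinkage toward a misleading prior can push the misordering probability above $1/2$, so $\exp(-n_r\Delta^2/(2\Sigma))$ is not even a valid pairwise bound.

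The missing idea is to keep the shrinkage term in the instance-level exponent. Carry out the conjugate computation you sketched: the mean of $\hat\mu_{j^*,r}-\hat\mu_{j,r}$ is a convex combination of $\Delta$ and $\nu_{ij}$. Dropping only the nonnegative $\nu_{ij}^2$ term then gives the biased gap of Lemma~3 of \citet{atsidakou-arxiv22a}, which the paper uses: $\mathbb{P}(\hat\mu_{j,r}\ge\hat\mu_{j^*,r}\mid\boldsymbol{\mu})\le e^{-g_r}$ with $g_r=\frac{n_r\Delta^2}{4\Sigma}+\frac{\nu_{ij}\Delta}{2\sigma_0^2}$. Its linear term cancels \emph{exactly} against the cross term $+\frac{\nu_{ij}\Delta}{2\sigma_0^2}$ obtained by expanding $-(\Delta-\nu_{ij})^2/(4\sigma_0^2)$. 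The integrand becomes $e^{-\nu_{ij}^2/(4\sigma_0^2)}e^{-A_r\Delta^2}$ for either sign of $\nu_{ij}$, and the prior factor leaves the integral free of any $\Sigma/n_r$ contamination. A single tail bound,
\[
\int_\epsilon^\infty e^{-A_r\Delta^2}\,d\Delta\le\frac{1}{2A_r\epsilon}e^{-A_r\epsilon^2}\le\frac{1}{2A_r\epsilon}e^{-n_r\epsilon^2/(4\Sigma)},
\]
then yields both the sampling factor and precisely $C_{r,\epsilon}$, with no half-split.

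Three smaller points:
\begin{enumerate}
\item No counting factor is needed: elimination of $j^*$ implies some arm beats it, so a plain union bound over $j$ suffices.
\item The additive failure probability from Assumption~\ref{ass:gp} cannot be absorbed into a multiplicative constant in front of terms that can be arbitrarily small. The paper does not invoke that assumption here at all; it enters only in Theorem~\ref{theorem worst case}.
\item On $j^*$ being random: the paper simply treats it as a fixed index and integrates against the marginal of $\Delta$, which is your conditioning option.
\end{enumerate}
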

where $C_{r,\varepsilon} = \frac{1}{2A_r\varepsilon\sqrt{4\pi\sigma_0^2}}$ with 
$A_r = \frac{n_r}{4\Sigma} + \frac{1}{4\sigma_0^2}$, combining 
the sampling noise and prior variance into a single precision term.
\begin{corollary}[Prior Stopping Condition]
\label{cor:stopping-condition}
Let $S_{r+1}$ be the set of arms retained after round $r$. To ensure the cumulative error probability remains below $\delta$, the algorithm can safely terminate at the end of round $r$ if the current budget satisfies:
\begin{equation}
\begin{split}
    N \geq \frac{4 \Sigma}{\epsilon^2} \bigg[  \ln \left( \frac{R}{\delta} \right) + \ln(C_{r, \epsilon}) 
     + \ln \left( \sum_{\substack{j \in S_{r+1}  j \neq j^*}} \exp \left( - \frac{(\nu_{j^*} - \nu_j)^2}{4\sigma_0^2} \right) \right) \bigg],
\end{split}
\end{equation}
where $j^*$ is the candidate best arm in $S_{r+1}$. This condition implies that sufficient samples have been collected to distinguish the candidate arm from the remaining set of survivors with high probability, given the prior separation.
\end{corollary}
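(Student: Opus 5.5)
The corollary follows from Theorem~\ref{theorem:expected probability} by setting the per-round risk to at most $\delta/R$ and solving for the budget; the only real content is how the round index and the surviving set enter. Under the uniform allocation we will show that the bound of round $r$ is at most $\delta/R$ whenever the displayed inequality holds. A union bound over the at most $R$ rounds then bounds the cumulative error probability by $\delta$.

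\textbf{Step 1: per-round term.} Fix round $r$. From Theorem~\ref{theorem:expected probability}, the contribution of round $r$ is
\begin{equation*}
T_r := C_{r,\epsilon}\sum_{j\neq j^*}\exp\!\left(-\frac{(\nu_{j^*}-\nu_j)^2}{4\sigma_0^2}\right)\exp\!\left(-\frac{n_r\epsilon^2}{4\Sigma}\right).
\end{equation*}
Once we stop at the end of round $r$, the returned arm is the empirical best in $S_{r+1}$. Only arms that could still be returned contribute to the error event at termination, so the inner sum can be restricted to $j\in S_{r+1}\setminus\{j^*\}$, with $j^*$ the current candidate. This restriction is the step that needs care. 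It uses the fact that the eliminations in rounds $0,\dots,r$ are already covered by their own round terms, which were budgeted at $\delta/R$ each, so nothing is double counted.

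\textbf{Step 2: invert the exponential.} Require $T_r\le \delta/R$. Taking logarithms, this is equivalent to
\begin{equation*}
\frac{n\epsilon^2}{4\Sigma}\ \ge\ \ln\frac{R}{\delta}+\ln C_{r,\epsilon}+\ln\sum_{j\in S_{r+1},\,j\neq j^*}\exp\!\left(-\frac{(\nu_{j^*}-\nu_j)^2}{4\sigma_0^2}\right),
\end{equation*}
where $n$ is the effective per-arm sample count governing the exponential. Multiplying through by $4\Sigma/\epsilon^2$ gives the stated bound, provided $N$ stands for that effective count.

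\textbf{Step 3: identify $N$ with the effective count, and handle $C_{r,\epsilon}$.} In round $r$ the paper's effective budget scaling makes the posterior precision proportional to the cumulative observations on surviving arms. The survivors have accumulated at least $n_r$ samples each, so replacing $n_r$ by the current budget is legitimate under the paper's convention for $N$ in the stopping rule. Monotonicity then finishes this part: the error term is decreasing in the sample count, so any larger budget also satisfies $T_r\le\delta/R$.

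The main obstacle is that $C_{r,\epsilon}$ itself depends on $n_r$ through $A_r$, so the inequality is implicit in $N$. Since $A_r$ increases with $n_r$, $C_{r,\epsilon}$ decreases in $n_r$. Evaluating $C_{r,\epsilon}$ at the current allocation therefore gives a valid conservative condition: any larger $N$ only lowers the right-hand side.

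A further subtlety is that $\Sigma$ and $j^*$ are data-dependent. To handle this, condition on the event of Assumption~\ref{ass:gp}, under which the GP estimates are accurate and the candidate coincides with the arm used in the theorem's bound. The failure probability of that event is absorbed into $\delta$, by splitting $\delta$ between the assumption's failure event and the per-round error budget.
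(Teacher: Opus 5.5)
Your Step 2 (bound the round-$r$ term $T_r$ of Theorem~\ref{theorem:expected probability} by $\delta/R$ and take logarithms) is all the paper offers for this corollary, and your handling of the $n_r$-dependence of $C_{r,\epsilon}$ is fine. The gap is in the bridging steps. These are exactly where the displayed condition departs from what the theorem gives, and the justifications you offer there do not hold.

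\textbf{Union bound over rounds.} The hypothesis is a single inequality at the stopping round $r$, so at best it certifies $T_r\le\delta/R$. Nothing controls $T_{r'}$ for the executed rounds $r'<r$, and your claim that they were ``budgeted at $\delta/R$ each'' has no basis. If this test is the stopping rule, reaching round $r$ means the test \emph{failed} at every earlier round. Moreover, with the theorem's full sum over $\mathcal{K}\setminus\{j^*\}$ and fixed $\Sigma$, $n_{r'}\le n_r$ gives $A_{r'}\le A_r$, hence $C_{r',\epsilon}\ge C_{r,\epsilon}$ and $T_{r'}\ge T_r$. So $T_r\le\delta/R$ says nothing about $\sum_{r'\le r}T_{r'}$.

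\textbf{Restriction to survivors.} Restricting the inner sum to the realized set $S_{r+1}\setminus\{j^*\}$ is not a valid union bound. The inequality $\mathbb{P}(\bigcup_{j\in S}A_j)\le\sum_{j\in S}\mathbb{P}(A_j)$ needs a deterministic index set. Here $S_{r+1}$ is chosen by the very estimates $\hat\mu_{j,r}$ whose deviations you are bounding: it consists precisely of the arms that came out high. The theorem only supplies the larger sum over all of $\mathcal{K}\setminus\{j^*\}$, so the survivor-restricted inequality is strictly weaker than $T_r\le\delta/R$.

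\textbf{Identifying $N$ with the per-arm count.} Step 3 runs in the wrong direction. The exponent is $-n_r\epsilon^2/(4\Sigma)$ with $n_r=\lfloor N/(R|S_r|)\rfloor$. That survivors hold at least $n_r$ samples gives $N\ge n_r$, but you need a \emph{lower} bound on $n_r$ from the hypothesis on $N$. From $N\ge X$ you only get $n_r\ge\lfloor X/(R|S_r|)\rfloor$. Substituting $N$ for $n_r$ makes the term smaller, so it is no longer an upper bound. The displayed inequality follows only if $N$ denotes the per-arm cumulative count, or with an extra factor $R|S_r|$ on the right (compare the $4R\Sigma$ in Theorem~\ref{theorem worst case}).

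\textbf{Empirical versus true best arm.} Assumption~\ref{ass:gp} does not make the empirical candidate coincide with the true optimum. The bound $|\hat\mu_{j,r}-\mu_j|\le\xi(n_r)$ only makes the candidate $2\xi(n_r)$-optimal. Hence the prior gaps $\nu_{j^*}-\nu_j$ in your condition may refer to a different arm than in the theorem. Also, splitting $\delta$ changes the constant $\ln(R/\delta)$.

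\textbf{What the inversion does give.} Taking the theorem at face value, suppose that for \emph{every} executed round $r'\le r$,
$n_{r'}\ge\frac{4\Sigma}{\epsilon^2}\bigl[\ln\frac{R}{\delta}+\ln C_{r',\epsilon}+\ln\sum_{j\in\mathcal{K}\setminus\{j^*\}}\exp\bigl(-\frac{(\nu_{j^*}-\nu_j)^2}{4\sigma_0^2}\bigr)\bigr]$,
with $j^*$ the true best arm. Then the expected error is at most $\delta$. By the monotonicity above this is governed by round $0$, so it is a condition on the total budget rather than a round-$r$ stopping test. The paper does not close these gaps either.
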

Corollary~\ref{cor:stopping-condition} provides an expected-risk stopping criterion derived from Theorem~\ref{theorem:expected probability}: 
termination is safe in expectation once the prior separation and sampling evidence jointly drive the cumulative error below $\delta$. 
Theorem~\ref{theorem worst case} below strengthens this to a worst-case guarantee under Assumption~\ref{ass:gp}, deriving the explicit budget threshold $N_{\mathrm{stop}}$ used in Algorithm~\ref{alg:mf_prior_sh}. The proof is given in Appendix~\ref{subsection:worst case appendix}. 
%

Theorem~\ref{theorem worst case} guarantees that the GP approximation 
error eventually vanishes, which allows us to formally incorporate the 
uncertainty introduced by GP-based extrapolation across fidelities into our 
theoretical framework and establish that PSH can safely rely on GP predictions 
to move between fidelities without violating the worst-case guarantee of 
Theorem~\ref{theorem worst case}.
\begin{theorem}[Worst-case probability bound]\label{theorem worst case}
Under Assumption~\ref{ass:gp}, PSH (Algorithm~\ref{alg:mf_prior_sh}) used with an overall budget of $N_{\text{used}}$ 
identifies an $\epsilon$-best arm with probability at least 
$(1-\delta)$ if the following two conditions 
hold:
\begin{enumerate}[label=(\roman*)]
    \item \textbf{BAI condition:}
    \begin{equation}
        N_\text{used} \geq N_\text{stop} :=  \max_{j \in \mathcal{K} \setminus \{j^*\}} 
        \frac{4R\Sigma}{\tilde{\Delta}_{j,r}^2} \cdot 
        \left[\ln\!\left(\frac{2R(\frac{K}{2}-1)}{\delta}\right) 
        - \frac{(\nu_{j^*} - \nu_j)\tilde{\Delta}_{j,r}}{2\sigma_0^2}\right]
    \end{equation}
    \item \textbf{GP accuracy condition:} The used budget$N_\text{used}$ is large enough that 
    $\xi(N_\text{used}) \leq \epsilon/4$, i.e.,
    \begin{equation}
        N_\text{used} \geq N_{\mathrm{GP}} := \min\left\{n \in \mathbb{N} : 
        \xi(n) \leq \frac{\epsilon}{4}\right\}
    \end{equation}
    Such a finite $N_{\mathrm{GP}}$ is guaranteed to exist by 
    Assumption~\ref{ass:gp} and \citep{lederer2019uniform} (Theorem~3.3).
\end{enumerate}
\end{theorem}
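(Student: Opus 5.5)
The proof works on a good event where the GP predictions are accurate, and bounds the probability that Successive Halving makes a bad decision in any of its $R$ rounds. A union bound then splits the failure probability into two pieces, each at most $\delta/2$: GP inaccuracy, and sampling error on the accurate event. The threshold in (i) has exactly the shape needed for the second piece.

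\textbf{Step 1 (GP good event).} Let $\mathcal{G}$ be the event that $|\hat{\mu}_{j,r}-\mu_j|\le \xi(n_r)$ for all arms $j\in S_r$ and all rounds $r$. Assumption~\ref{ass:gp}, with its confidence level rescaled by the number of arm--round pairs, makes $\mathcal{G}$ fail with probability at most $\delta/2$. Condition (ii) then gives $\xi(n_r)\le\epsilon/4$. To justify this I read $\xi$ as non-increasing and use that the final-round allocation dominates, or else I apply (ii) directly to the cumulative per-arm count $N_{\text{used}_j}$ on which the posterior is conditioned. On $\mathcal{G}$ every prediction is within $\epsilon/4$ of the truth. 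Hence any two arms whose predictions are misordered have true gap at most $\epsilon/2$. This means $\epsilon$-sized gaps survive the GP extrapolation, up to a constant factor that is absorbed into the definition of $\tilde{\Delta}_{j,r}$, which I take to be $\Delta_{\epsilon,j}$ shrunk by the GP error margin.

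\textbf{Step 2 (per-round, per-arm error).} Write $\mathcal{E}_r$ for the event that, in round $r$, the best remaining $\epsilon$-good arm is eliminated or displaced as the recommendation. For this to happen, some $j$ with $\mu_{j^*}-\mu_j>\epsilon$ must obtain a posterior mean at least that of $j^*$. Conditioned on the data, the posterior of $\mu_{j^*}-\mu_j$ is Gaussian. As in the proof of Theorem~\ref{theorem:expected probability}, its mean combines the prior offset $(\nu_{j^*}-\nu_j)$, weighted by $1/\sigma_0^2$, with the sample evidence, and its variance is $\Sigma$ per sample. A Chernoff bound on this Gaussian, now evaluated at the fixed instance rather than integrated against $\mathcal{H}_0$, gives
\begin{equation*}
\mathbb{P}(\text{$j$ beats $j^*$ in round $r$}) \le \exp\!\Big(-\frac{n_r\tilde{\Delta}_{j,r}^2}{4\Sigma}-\frac{(\nu_{j^*}-\nu_j)\tilde{\Delta}_{j,r}}{2\sigma_0^2}\Big).
\end{equation*}
The cross term comes from expanding the square $(\tilde{\Delta}+\text{prior shift})^2$ and discarding the nonnegative pure-prior term. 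This is the worst-case analogue of the factorised exponent in Theorem~\ref{theorem:expected probability}.

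\textbf{Step 3 (union bound and budget).} Sum over the rounds $r<R$ and over the at most $K/2-1$ competitors that can threaten $j^*$ in a round. Using $n_r\gtrsim N_{\text{used}}/R$ for the surviving arms, each term is at most $\delta/(2R(K/2-1))$ exactly when
\begin{equation*}
N_{\text{used}}\ge \frac{4R\Sigma}{\tilde{\Delta}_{j,r}^2}\Big[\ln\tfrac{2R(K/2-1)}{\delta}-\frac{(\nu_{j^*}-\nu_j)\tilde{\Delta}_{j,r}}{2\sigma_0^2}\Big].
\end{equation*}
Taking the maximum over $j$ gives condition (i). Combining with Step 1, the total failure probability is at most $\delta$.

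\textbf{Main obstacle.} The delicate step is Step 2. The posterior here is a GP posterior at fidelity $B$, not a conjugate mean of i.i.d.\ samples, and the arm $j^*$ being compared against is itself random. I would first try to justify the Gaussian tail by treating the posterior predictive variance $\Sigma$ as an upper bound on the variance of $\hat{\mu}_{j^*,r}-\hat{\mu}_{j,r}$ given $\mathcal{G}$. I would handle the randomness of $j^*$ by union-bounding over the fixed true best arm. This means the displacement argument must be phrased in terms of the true optimum.
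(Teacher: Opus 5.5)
\textbf{Step 3 does not close as written, and it is missing the idea the paper's proof is built on.} Two of your claims are false for PSH's allocation.

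First, the number of arms that can outrank $j^*$ in round $r$ is $|S_r|-1$, which is $K-1$ in round $0$, not $\frac K2-1$.

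Second, $n_r=\lfloor N/(R|S_r|)\rfloor$, so ``$n_r\gtrsim N_{\text{used}}/R$'' fails badly in early rounds (e.g.\ $n_0=\lfloor N/(RK)\rfloor$). With $\Sigma$ read as a per-sample variance, as in your Step 2, condition (i) therefore does not control the round-$r$ exponent $n_r\Delta^2/(4\Sigma)$.

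The paper does not get the constant from a per-round count. It runs the argument with the true gaps $\max\{\epsilon,\mu_{j^*}-\mu_j\}$ and splits into two cases.
\begin{itemize}
\item If every other arm is $\epsilon$-separated, it uses Lemma~4 of \citet{atsidakou-arxiv22a}. The best arm leaves $S_r$ only if at least half of $S_r$ outrank it, so the per-round elimination probability is at most twice the hardest pairwise term, with no factor $|S_r|$. The $(\frac K2-1)$ inside the logarithm is then pure slack.
\item If some arm is $\epsilon$-close, at most $K-2=2(\frac K2-1)$ arms are $\epsilon$-bad. The pairwise biased-gap bound (their Lemma~3), union-bounded over these arms and the $R$ rounds, uses up exactly $\ln(2R(\frac K2-1)/\delta)$.
\end{itemize}
The reduction behind your Step 2 is sound and would let you skip the case split: failure requires some $\epsilon$-bad arm to outrank the true best in a round where the true best is still active. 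For it to work, you must count the round--competitor pairs $\sum_r(|S_r|-1)$ rather than bounding each round by $\frac K2-1$. You must also get the round-$r$ exponent from the actual $n_r$.

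\textbf{Steps 1 and 2 also need repair.}

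Your Chernoff computation reproduces the biased gap correctly, but it is a tail bound for a fixed instance. Its exponent must therefore involve the true gap $\mu_{j^*}-\mu_j$, not the data-dependent $\tilde\Delta_{j,r}$. Saying the GP error is ``absorbed into the definition of $\tilde\Delta_{j,r}$'' does not bridge the two. The bridge you need is this:
\begin{itemize}
\item On $\mathcal{G}$, $\tilde\Delta_{j,r}\le\Delta_{\epsilon,j}$.
\item The map $h(\Delta)=\frac{N_\text{used}\Delta^2}{4R\Sigma}+\frac{(\nu_{j^*}-\nu_j)\Delta}{2\sigma_0^2}$ is convex with $h(0)=0<\ln(2R(\frac K2-1)/\delta)$.
\item Condition (i) says exactly that $h(\tilde\Delta_{j,r})$ exceeds this logarithm, so $h(\Delta_{\epsilon,j})$ does too.
\end{itemize}
Hence (i) implies the true-gap hypothesis the paper's argument runs under, modulo the leader-versus-true-best issue you already flag.

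Separately, (ii) bounds $\xi(N_\text{used})$, while $n_r$ and $N_{\text{used}_j}$ are at most $N_\text{used}$. With $\xi$ non-increasing, (ii) therefore gives no bound on $\xi(n_r)$; your monotonicity remarks point the wrong way.

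Conversely, if you simply assume $\xi(n_r)\le\epsilon/4$ in every round, your misordering observation alone proves the theorem. On $\mathcal{G}$, any arm that outranks the true best in some round is within $\epsilon/2$ of it.
\begin{itemize}
\item If the true best is eliminated, every arm kept in that round outranked it.
\item If it is not eliminated, the returned arm outranks it at the stopping round.
\end{itemize}
Either way the returned arm is $\epsilon/2$-good on $\mathcal{G}$. This uses neither (i) nor the Gaussian tail you identified as the main obstacle. As organized, Step 1 is either unsupported by (ii) or makes Steps 2--3 redundant.
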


\color{black}
\section{Comparison of Non-Stochastic and Prior-Guided Successive Halving}
Our derivation of prior-guided error bounds provides an extension to the non-stochastic Successive Halving (SH) algorithm established by \citet{jamieson-aistats16a}. While SH is minimax optimal for the prior-free setting, our results characterize the efficiency gains possible when prior knowledge is available. Specifically, we focus on the budget analysis provided by \citet{li-jmlr18a} in the context of the Hyperband algorithm.
In Theorem \ref{theorem worst case}, we provide a dynamic stopping condition that enables early termination when the combined evidence of the prior and observations distinguishes an $\epsilon$-optimal arm with sufficient distance to the others.

\subsection{From Instance-Dependent to Prior-Dependent Complexity}
Standard SH makes no distributional assumptions, instead relying on a deterministic envelope $\gamma(t)$ that bounds the error between the observed loss $\ell_{i,t}$ and the true limit value $\ell_{i,*}$ such that $|\ell_{i,t} - \ell_{i,*}| \le \gamma(t)$. \citet{li-jmlr18a} (Theorem 1) establish a sufficient budget for the identification of an $\epsilon$-optimal arm, if the used budget $N_\text{used}$ exceeds a threshold $N_\text{stop}$:
\begin{equation}
\begin{split}\label{equation:hyperband bound}
    N_\text{stop} = \,  2R 
      \max_{{j\in\mathcal{K}\setminus j^*}} j\,(1+ \gamma^{-1} \left( \max \left\{ \frac{\epsilon}{2}, \frac{\mu_{j^*} - \mu_j}{2} \right\} \right) .
\end{split}
\end{equation}
In this setting, $\gamma^{-1}(\cdot )$ represents the number of iterations required to reduce the error envelope below half the ground truth gap and $\eta$ is the elimination rate. With $\gamma^{-1}(y) \propto y^{-2}$ in the standard Gaussian HPO setting, this instance-dependent and frequentist bound requires a budget scaling with the sum of the inverse squared optimality gaps.
We consider the infinite-horizon complexity bound by \citet{li-jmlr18a}. Introducing a target confidence level $\delta$ allows us to invert the fixed-budget analysis: rather than minimizing an error for a fixed resource constraint, we calculate the sufficient budget required to guarantee PSH (Algorithm \ref{alg:mf_prior_sh}) finds an $\epsilon$-optimal arm, thereby enabling an early-stopping mechanism.

Crucially, our work introduces the prior mean $\nu_i$ as a component of the error bound. In the Gaussian observation model $\ell_{i,t} \sim \mathcal{N}(\mu_i, \Sigma/n_r)$, $\gamma$ is replaced by the standard error $\sqrt{\Sigma/n_r}$. Consequently, the frequentist sufficiency condition for SH requires that, in order to distinguish non-optimal from optimal arms, the sample complexity must satisfy
$n_{r} \ge \frac{4\Sigma}{(\mu_{j^*} - \mu_j)^2}.$
This corresponds to Equation \ref{equation:hyperband bound}. The standard SH algorithm must satisfy this for every arm, forcing the total budget to scale with the statistical difficulty of every arm.

In contrast, our Theorem \ref{theorem:expected probability} reshapes this worst case-counting problem into a prior-weighted concentration problem. The expected misidentification probability \mbox{$\mathbb{E}[\mathbb{P}(\mathcal{E}_\epsilon\mid \mu)]$} is determined by the concentration of the prior mass and the sampling budget:
\begin{equation}
\begin{split}
    \mathbb{E}[\mathbb{P}(\mathcal{E}_\epsilon \mid \mu)] \le \sum_{r=0}^{R-1} C_{r, \epsilon} \sum_{\substack{j \in S_r \\ j \neq j^*}} 
     \underbrace{\exp\left(-\frac{(\nu_{j^*} - \nu_j)^2}{4\sigma_0^2}\right)}_{\text{Prior Effect } (\mathcal{P})} 
     \cdot \underbrace{\exp\left(-\frac{n_r \epsilon^2}{4\Sigma}\right)}_{\text{Sampling Effect } (\mathcal{S})}.
\end{split}
\end{equation}
Crucially, this bound relaxes the $O((\mu_{j^*}-\mu_j)^{-2})$ sample complexity of the prior-free case. If the prior is informative (i.e., the prior gap $\nu_{j^*} - \nu_j$ is large), the prior effect term $\mathcal{P}$ decays to zero, effectively diminishing the influence of suboptimal arms from the sum. This allows the algorithm to identify an $\epsilon$-best arm, even when the sampling budget is insufficient to satisfy the frequentist condition from standard SH. Conversely, as $\sigma_0^2 \to \infty$, $\mathcal{P} \to 1$ for uninformative priors. We are consistent in that we recover the standard convergence rates of \citet{jamieson-aistats16a}. Appendix \ref{appendix::example} includes an example of the derivation of our bound.
A sufficient condition for $N_\text{stop} < N_{SH}$ is derived in Theorem \ref{theorem worst case}, which isolates the minimum prior gap required for budget savings.

\subsection{Robustness of Prior-Guided Successive Halving}
\label{subsec:robustnes}
While informative priors accelerate PSH, practical use cases may involve 
uncertain or misleading beliefs. For uninformative ($\nu_j=\nu$) priors, the prior term vanishes from the complexity bound.
We recover standard SH behavior. With misleading priors ($\nu_j > \nu_{j^*}$), the negative prior gap imposes a bounded budget penalty scaling linearly with the magnitude of error, but PSH cannot exceed the pre-defined budget $N$. Full details are provided in Appendix~\ref{subsec:robustnes_appendix}.

\section{Empirical Results}\label{sec:emipirical_results}

The following experiments are designed as proof-of-concept validation of our theoretical results rather than a comprehensive empirical benchmark. Our primary goal is to confirm that the budget reductions by Theorem 6.4 are observed in practice, and that the robustness guarantees against misleading priors hold empirically. To this end, we compare Prior-Guided Successive Halving (PSH) to standard Successive Halving (SH). We consider five prior construction strategies ranging from highly informative to intentionally misleading, including rank-based, performance-based, indicator, uniform, and inverse rank priors; details are provided in Appendix~\ref{sec:appendix prior generation}. In Section~\ref{subsec:experimental setup} we describe our experimental setup and discuss the results in Section~\ref{subsec:results}.

\subsection{Experimental Setup}\label{subsec:experimental setup}
To assess the optimization performance of PSH, we conduct experiments on a synthetic benchmark, as well as a standard HPO benchmark. 
Details can be found in Appendix~\ref{sec:appendix experimental setup}.
We construct a \textbf{synthetic search space} where each arm follows a smooth, saturating learning curve. The target function is $f_j(t) = \mu_j \bigl(1 - e^{-t/\tau_j}\bigr)$ at each fidelity $t\leq B=256$. We sample the true final performance $\mu_j\in[0,1]$ for each of the 256 candidate arms, while $\tau_j$ varies linearly across arms. Thus, we induce bias from observing arms at low fidelities in lieu of stochastic noise. 
We use a Gaussian process with a saturating exponential kernel plus an RBF adjustment~\citep{rasmussen-book06a}, yielding a flexible non-stationary kernel for the promotion across fidelities. 
Figure \ref{fig:learning curves} in Appendix \ref{subsec:appendix benchmark} shows example learning curves for varying $\mu$ and $\tau$.
We evaluate on \textbf{LCBench}~\citep{zimmer-tpami21a} via YAHPO Gym~\citep{pfisterer-automl22a}, using $K=256$ arms and maximum fidelity $B=52$. For this benchmark, we model learning curves with a linear kernel, as discussed in Section \ref{sec:prior guided sh}. Further experimental results are provided in \cref{sec:appendix experimental setup}.
All experiments are conducted on compute nodes equipped with AMD EPYC Milan 7763 (2.45GHz), 256GB RAM CPUs of which 2 cores and 4GB RAM are allocated per run.
\begin{figure}[t]
    \centering
    \begin{subfigure}[t]{0.4\linewidth}
        \centering
        \texttt{Synthetic Results}
        \includegraphics[width=\linewidth,trim={0cm 0.25cm 0cm 0cm},clip]{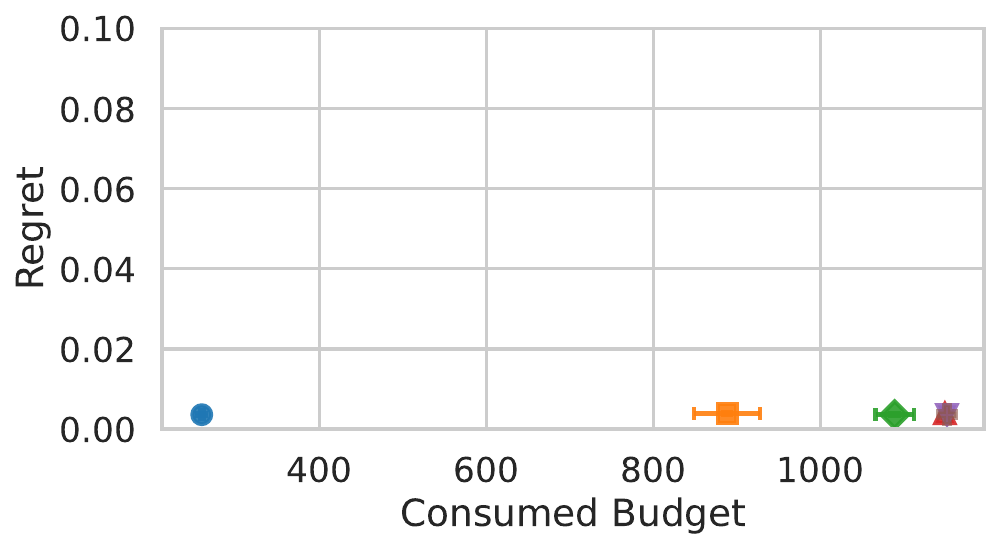}
        \label{fig:synthetic}
    \end{subfigure}
    \hfill
    \begin{subfigure}[t]{0.4\linewidth}
        \centering
        \texttt{LCBench Results}
        \includegraphics[width=\linewidth,trim={0cm 0.25cm 0cm 0cm},clip]{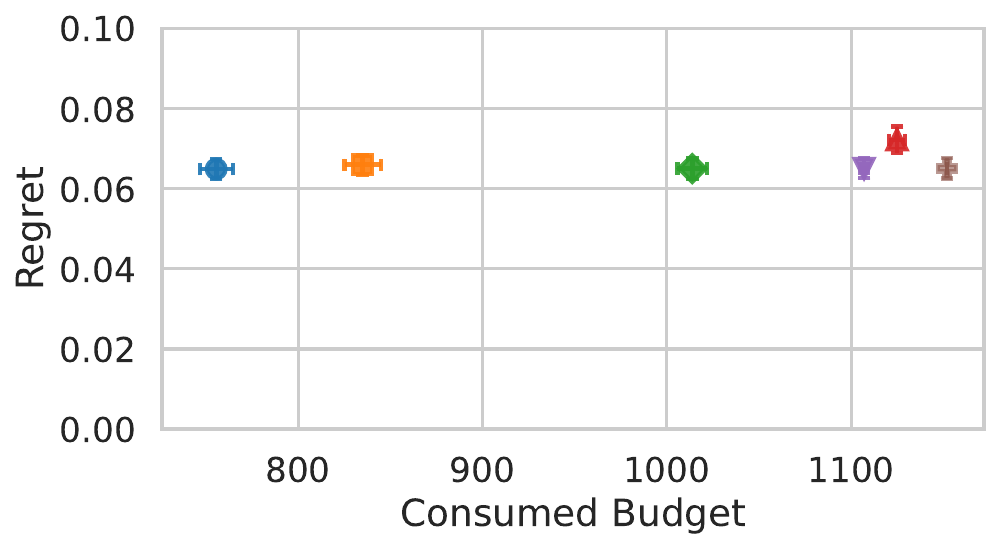}
        \label{fig:lcbench}
    \end{subfigure}
    \includegraphics[width=\linewidth,trim={0cm 0cm 0cm 0.25cm},clip]{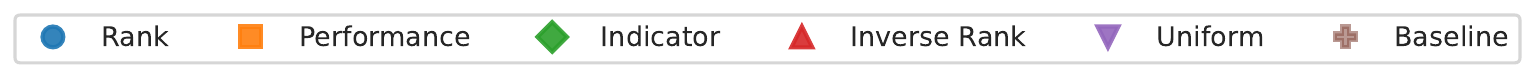}
    \caption{Performance of prior-guided successive halving (PSH) with different priors on the synthetic benchmark (left) and LCBench (right), showing consumed budget and validation regret with standard deviations. On the left, the results for uniform and inverse rank prior overlap with Standard SH.}
    \label{fig:empirical_resultsparte_front_lcbench}
\end{figure}

\subsection{Results}\label{subsec:results}
Figure \ref{fig:empirical_resultsparte_front_lcbench} reports the mean regret and consumed budget, with standard deviations given, for both the synthetic benchmark and LCBench. Across both settings, the results show a consistent pattern: PSH achieves substantial budget savings while maintaining minimal regret. Below, we discuss the effect of different prior kinds in order of decreasing effect on the early stopping criterion.

For \textbf{rank priors}, PSH achieves the most substantial budget reductions on both benchmarks. On the synthetic benchmark, they lead to the largest budget reductions at $256$ evaluations compared to the standard SH budget of $1150$, implying an evaluation budget reduction of roughly $80\%$.
On LCBench, the rank prior also results in the largest effect of the early stopping criterion. After an average budget of around $750$, the criterion triggers, which is a reduction of roughly $35\%$. The regret remains almost identical to standard SH, indicating that the rank prior is highly informative.
For the \textbf{performance prior}, we observe similar behavior to the rank prior. On the synthetic benchmark, it achieves near-zero regret with a substantially reduced budget.

On LCBench, it stops after approximately $840$ evaluations. It maintains regret comparable to standard SH, indicating that performance-based information offers a considerable speedup.
The \textbf{indicator prior} still reduces the budget compared to standard SH, ranking third regarding consumed budget. 

With the \textbf{uniform prior}, we observe only a marginal budget reduction compared to standard SH. As discussed in Section \ref{subsec:robustnes}, the prior term vanishes in the early-stopping criterion for uniform priors; however, as evaluations increase, the shrinking GP posterior variance $\Sigma$ and widening predicted gaps $\tilde{\Delta}_{j,r}$ between arms still slightly reduce the required remaining budget. For both benchmarks, this reduction is negligible.
Notably, PSH recovers from the misleading \textbf{inverse rank prior} without significant budget inflation or regret, demonstrating robustness: the GP-based model quickly overrules incorrect priors as observations accumulate, reverting the stopping rule to standard SH behavior. Furthermore, since PSH executes a strict prefix of the evaluations performed by standard SH, any reduction in consumed budget translates directly and provably to a reduction in energy consumption.

A \textbf{sensitivity analysis} regarding user confidence $\sigma_0$ is provided in Appendix~\ref{sec:appendix further results}.
Furthermore, our stopping criterion transfers to Hyperband and PriorBand, achieving consistent budget savings without compromising solution quality. This demonstrates its compatibility as an early-stopping mechanism beyond standard SH (see Appendices~\ref{sec:appendix:hyperband} and~\ref{sec:appendix:priorband}).
Overall, our results show that informative priors reduce sample complexity while maintaining regret comparable to the baseline.

\section{Conclusion}
We presented a theoretical analysis of how priors govern sample complexity in fixed-budget $\epsilon$–BAI. Distribution-dependent bounds show that informative priors provably yield budget reductions, while being robust to uninformative or misleading priors.
Building on these insights, we introduced PSH, a multi-fidelity, prior-guided variant of SH that integrates GP models with a theoretically grounded early-stopping rule, achieving significant budget reductions without compromising regret.
Together, these results provide a principled foundation for efficient and more sustainable HPO.

\paragraph{Limitations \& Broader Impact} The degree of budget savings achieved by PSH depends directly on the 
quality and informativeness of the provided prior. While our theoretical 
analysis characterizes robustness to uninformative and misleading priors, 
the practical benefit of PSH is tied to the availability of meaningful 
prior knowledge, whose origin and reliability may vary across use cases 
and application domains. Our worst-case guarantees (Theorem~\ref{theorem worst case}), specifically the convergence of the GP estimation to the ground truth, rely on Assumption~\ref{ass:gp}, requiring the standard deviation of the GP $\sigma_{N_\text{used}}(t) \in \mathcal{O}(\log(n_r)^{-1/2 - \varepsilon})$, which may fail under kernel misspecification, non-stationary observation noise from data subsampling, or significant distribution shifts across fidelities. In such cases, biased GP extrapolation could prevent $\tilde{\Delta}_{j,r}$ from converging fast enough to satisfy Theorem~\ref{theorem worst case} within a practical budget. A promising way to relax this is by replacing GP surrogates with learning-curve foundation models~\citep{rakotoarison-icml24a}, which could provide robust extrapolation and strengthen PSH's early-stopping decisions.
This paper advances the methodology of HPO. We do not foresee specific societal risks beyond those generic to improved optimization techniques.



\bibliographystyle{abbrvnat}
\bibliography{strings,lib,ext_lib,proc}

\newpage
\appendix
\onecolumn
\section*{Overview of the Appendix}
\contentsline {section}{\numberline{\ref{list of symbols}} List of Symbols}{\pageref{list of symbols}}{}

\contentsline {section}{\numberline{\ref{sec:appendix theoretical guarantees}} Theoretical Guarantees}{\pageref{sec:appendix theoretical guarantees}}{}
\contentsline {subsection}{\numberline{\ref{subsec:appendix expected behavior}} Expected behavior of Prior-Guided Successive Halving}{\pageref{subsec:appendix expected behavior}}{}
\contentsline {subsection}{\numberline{\ref{subsection:worst case appendix}} Worst Case Probability Bound for Prior-Guided Successive Halving}{\pageref{subsection:worst case appendix}}{}
\contentsline {subsection}{\numberline{\ref{appendix:comparison of sh bounds}} Comparison of Prior-Guided Successive Halving to Standard Successive Halving}{\pageref{appendix:comparison of sh bounds}}{}
\contentsline {subsection}{\numberline{\ref{appendix::example}} Example}{\pageref{appendix::example}}{}
\contentsline {subsection}{\numberline{\ref{appendix:prior efficiency}} Prior Efficiency Condition}{\pageref{appendix:prior efficiency}}{}

\contentsline {section}{\numberline{\ref{sec:appendix experimental setup}} Details on the Experimental Setup and Prior Generation}{\pageref{sec:appendix experimental setup}}{}
\contentsline {subsection}{\numberline{\ref{subsec:appendix benchmark}} Benchmarks}{\pageref{subsec:appendix benchmark}}{}
\contentsline {subsection}{\numberline{\ref{sec:appendix prior generation}} Prior Generation}{\pageref{sec:appendix prior generation}}{}
\contentsline {subsection}{\numberline{\ref{subsec:robustnes_appendix}} Robustness of Prior-Guided Successive Halving}{\pageref{subsec:robustnes_appendix}}{}

\contentsline {section}{\numberline{\ref{sec:appendix algorithm}} Multi-Fidelity Prior-Guided Successive Halving (PSH) Algorithm}{\pageref{sec:appendix algorithm}}{}
\contentsline {subsection}{\numberline{\ref{sec:appendix:hyperband}} Hyperband Extension}{\pageref{sec:appendix:hyperband}}{}
\contentsline {subsection}{\numberline{\ref{sec:appendix:priorband}} PriorBand Extension}{\pageref{sec:appendix:priorband}}{}

\contentsline {section}{\numberline{\ref{sec:appendix min prior gap plot}} Analysis of the Minimum Required Prior Gap}{\pageref{sec:appendix min prior gap plot}}{}

\contentsline {section}{\numberline{\ref{sec:appendix further results}} Further Results}{\pageref{sec:appendix further results}}{}
\contentsline {subsection}{\numberline{\ref{subsec:appendix kernel ablation}} LCBench Ablation with Different Kernels}{\pageref{subsec:appendix kernel ablation}}{}
\contentsline {subsection}{\numberline{\ref{subsec:appendix sigma ablation}} Ablation for Varying Sigmas}{\pageref{subsec:appendix sigma ablation}}{}

\newpage
\section{List of Symbols}\label{list of symbols}
\begin{table}[H]
\centering
\caption{List of Symbols}
\label{tab:symbols}
\begin{tabular}{l p{10cm}}
\toprule
\textbf{Symbol} & \textbf{Description} \\
\midrule
\multicolumn{2}{l}{\textit{Problem Setup}} \\
$\mathcal{K}$ & Set of arms \\
$K$ & Total number of arms \\
$L_i(b)$ & Learning curve function for arm $i$ at fidelity $b \in [0, B]$ \\
$B$ & Maximum fidelity \\
$\mu_i$ & True mean reward of arm $i$ at maximum fidelity, $\mu_i := L_i(B)$ \\
$j^*$ & Index of the optimal arm, $j^* \in \argmax_{k} \mu_k$ \\
$y_t$ & Noisy observation at step $t$, $y_t = L_{i_t}(b_t) + \eta_t$ \\
$\eta_t$ & Independent Gaussian noise, $\eta_t \sim \mathcal{N}(0,\sigma_{i_t}^2)$ \\
$\Sigma$ & Sum of GP posterior predictive variances at target fidelity $B$ \\
$N$ & Total fixed evaluation budget \\
$N_{\text{used}}$ & Total budget consumed so far across all arms and rounds \\
$N_{\text{used}_j}$ & $N_{\text{used}_j}$ per arm $j\in\mathcal{K}$\\
$\epsilon$ & Margin for $\epsilon$-Best Arm Identification \\
$\mathcal{E}_\epsilon$ & Error event: returned arm $j$ satisfies $\mu_{j^*} - \mu_j > \epsilon$ \\
$j$ & Arm returned by the algorithm \\
\midrule
\multicolumn{2}{l}{\textit{Priors and Beliefs}} \\
$\mathcal{H}_0$ & User-provided Gaussian Process (GP) prior belief over arm means \\
$\nu_i$ & Prior mean of arm $i$ \\
$\sigma_0^2$ & Prior variance; smaller values indicate higher user certainty \\
$\nu_{ij}$ & Difference in prior means, $\nu_{ij} = \nu_{j^*} - \nu_j$ \\
$\hat{\mu}_{j,r}$ & GP posterior mean for arm $j$ at target fidelity $B$ in round $r$ \\
$\sigma_{j,r}^2$ & GP posterior predictive variance for arm $j$ at target fidelity $B$ in round $r$ \\
$\mathcal{D}_j$ & Observations collected for arm $j$, $\mathcal{D}_j = \{(t, y_{j,t})\}_{t=1}^{N_{\text{used}_j}}$ \\
\midrule
\multicolumn{2}{l}{\textit{Successive Halving and Analysis}} \\
$R$ & Total number of rounds, $R = \lceil \log_\eta K \rceil$ \\
$r$ & Current round index, $r \in \{0, \dots, R-1\}$ \\
$\eta$ & Elimination rate; set to $\eta = 2$ throughout \\
$S_r$ & Set of active arms remaining in round $r$ \\
$n_r$ & Target total observations per arm at end of round $r$, $n_r = \lfloor N / (R \cdot |S_r|) \rfloor$; new observations per arm in round $r$ are $n_r - n_{r-1}$ \\
$n_{\text{prev}}$ & Value of $n_r$ at the end of the previous round \\
$\Delta_{\epsilon,j}$ & True performance gap, $\Delta_{\epsilon,j} = \max\{\epsilon, \mu_{j^*} - \mu_j\}$ \\
$\tilde{\Delta}_{j,r}$ & GP-adjusted gap, $\tilde{\Delta}_{j,r} = \max\{\epsilon, \hat{\mu}_{j^*,r} - \hat{\mu}_{j,r} - 2\xi(n_r)\}$ \\
$g_r$ & Biased gap function quantifying identification difficulty in round $r$ \\
$A_r$ & Precision term, $A_r = \frac{n_r}{4\Sigma} + \frac{1}{4\sigma_0^2}$ \\
$C_{r,\epsilon}$ & Bounding coefficient, $C_{r,\varepsilon} = \frac{1}{2A_r\varepsilon\sqrt{4\pi\sigma_0^2}}$ \\
$\delta$ & Confidence parameter; algorithm succeeds with probability at least $1 - \delta$ \\
$\delta_{\text{GP}}$ & GP approximation failure probability; set to $\delta / (4RK)$ \\
$N_{\text{stop}}$ & Sufficient budget threshold derived from Theorem~\ref{theorem worst case}, condition~(i) \\
$N_{\text{GP}}$ & Minimum budget for GP accuracy condition, $N_{\text{GP}} = \min\{n : \xi(n) \leq \epsilon/4\}$ \\
$\xi(n_r)$ & GP approximation error bound; $\mathbb{P}(|\hat{\mu}_{j,r} - \mu_j| \leq \xi(n_r)) \geq 1 - \delta_{\text{GP}}$ \\
$\sigma_{n_r}(t)$ & GP posterior standard deviation at fidelity $t$ given $n_r$ observations \\
\bottomrule
\end{tabular}
\end{table}
\newpage
\section{Theoretical Guarantees}\label{sec:appendix theoretical guarantees}
\subsection{Expected behavior of Prior-Guided Successive Halving}\label{subsec:appendix expected behavior}

\begin{theorem}[$\epsilon$-Best Arm Identification]\label{theorem:expected probability_appendix}
Consider the Successive Halving algorithm with budget $N$, where the optimization is initialized with a prior $\mathcal{N}(\nu_{i}, \sigma_{0}^2)$ and arms are eliminated based on their posterior means (Algorithm \ref{alg:mf_prior_sh}). Let $j^*$ denote the optimal arm. We define the error event $\mathcal{E}_\epsilon$ as returning an arm $j$ such that $\mu_{j^*} - \mu_j > \epsilon$.
The expected probability of error is bounded by summing the risk over all $R$ rounds:
\begin{align}
    \mathbb{E}[\mathbb{P}(\mathcal{E}_\epsilon \mid \boldsymbol{\mu})]
    \leq \sum_{r=0}^{R-1} C_{r, \epsilon} \sum_{j \in \mathcal{K}, j \neq j^*} \exp \left( - \frac{(\nu_{j^*} - \nu_j)^2}{4\sigma_0^2} \right) \exp \left( - \frac{n_r \epsilon^2}{4 \Sigma} \right)
\end{align}

\end{theorem}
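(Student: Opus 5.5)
The plan follows the Bayesian elimination analysis of \citet{atsidakou-arxiv22a}, specialised to the $\epsilon$-error event and to posterior-mean elimination. \textbf{Step 1: union bound over rounds and arms.} The error event $\mathcal{E}_\epsilon$ can only occur if, in some round $r$, some arm $j$ with $\mu_{j^*}-\mu_j>\epsilon$ has posterior mean at least that of $j^*$. Otherwise $j^*$ survives every round and would be returned, or an $\epsilon$-good arm would be returned instead. Hence
\begin{equation*}
\mathbb{P}(\mathcal{E}_\epsilon\mid\boldsymbol{\mu})\le\sum_{r=0}^{R-1}\sum_{j\neq j^*}\mathbb{I}(\mu_{j^*}-\mu_j>\epsilon)\,\mathbb{P}\big(\hat\mu_{j,r}\ge\hat\mu_{j^*,r}\mid\boldsymbol{\mu}\big).
\end{equation*}
Taking the expectation over $\boldsymbol{\mu}\sim\mathcal{H}_0$, the sum can be pulled outside. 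It therefore suffices to bound one pairwise term
\begin{equation*}
\mathbb{E}_{\boldsymbol{\mu}}\big[\mathbb{I}(\Delta>\epsilon)\,\mathbb{P}(\hat\mu_{j,r}-\hat\mu_{j^*,r}\ge0\mid\boldsymbol{\mu})\big],\qquad \Delta:=\mu_{j^*}-\mu_j .
\end{equation*}

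\textbf{Step 2: conditional Gaussian tail.} With Gaussian noise and Gaussian prior, the difference of posterior means after $n_r$ effective samples is Gaussian. Its variance is controlled by $\Sigma/n_r$, treating $\Sigma$ as the combined predictive variance of the pair. Because posterior means shrink toward $\nu$, its mean is a convex combination of $-\Delta$ and $-(\nu_{j^*}-\nu_j)$. Rather than track the shrinkage exactly, I will use the cruder Chernoff bound $\mathbb{P}(\cdot\mid\boldsymbol{\mu})\le\exp(-n_r\Delta^2/(4\Sigma))$. Here the constant $4$ absorbs the variance of a difference of two arms and the shrinkage factor.

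\textbf{Step 3: integrate against the prior.} Under $\mathcal{H}_0$, $\Delta\sim\mathcal{N}(\nu_{j^*}-\nu_j,2\sigma_0^2)$, with density $(4\pi\sigma_0^2)^{-1/2}\exp(-(\Delta-\nu_{j^*j})^2/(4\sigma_0^2))$, where $\nu_{j^*j}:=\nu_{j^*}-\nu_j$. The term to bound is then
\begin{equation*}
\frac{1}{\sqrt{4\pi\sigma_0^2}}\int_\epsilon^\infty\exp\Big(-\frac{n_r\Delta^2}{4\Sigma}-\frac{(\Delta-\nu_{j^*j})^2}{4\sigma_0^2}\Big)\,d\Delta .
\end{equation*}
Expanding the exponent gives $-A_r\Delta^2+\frac{\nu_{j^*j}\Delta}{2\sigma_0^2}-\frac{\nu_{j^*j}^2}{4\sigma_0^2}$, with $A_r=\frac{n_r}{4\Sigma}+\frac{1}{4\sigma_0^2}$ exactly as in the statement. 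The constant term $-\nu_{j^*j}^2/(4\sigma_0^2)$ factors out and becomes the prior effect. For the remaining integral I will use the Gaussian tail estimate $\int_\epsilon^\infty e^{-A_r\Delta^2}\,d\Delta\le\frac{e^{-A_r\epsilon^2}}{2A_r\epsilon}$. This produces the prefactor $C_{r,\epsilon}=\frac{1}{2A_r\epsilon\sqrt{4\pi\sigma_0^2}}$. Finally, $e^{-A_r\epsilon^2}\le e^{-n_r\epsilon^2/(4\Sigma)}$ gives the sampling effect.

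\textbf{Main obstacle.} The hard step is the linear cross term $\exp\big(\nu_{j^*j}\Delta/(2\sigma_0^2)\big)$, which is not bounded on $[\epsilon,\infty)$ when $\nu_{j^*j}>0$. I will first try to absorb it by splitting it off with the discarded $e^{-\epsilon^2/(4\sigma_0^2)}$ part of $e^{-A_r\Delta^2}$. If that fails, I will complete the square and handle the shifted tail separately. Failing both, I will restrict to the case $\nu_{j^*j}\le0$ (or $\nu_{j^*j}\le\epsilon$), where the cross term is at most $1$. In that case the stated form follows directly. For the general case I expect to need either a looser constant or an argument that shrinkage toward the prior in Step 2 already supplies the compensating factor. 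Summing the pairwise bounds over $j\neq j^*$ and over $r$ then yields the claim.
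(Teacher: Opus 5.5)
\textbf{The gap is in your Step 2.} Your Steps 1 and 3 coincide with the paper's proof: the union bound over rounds and arms, integrating against $\Delta\sim\mathcal{N}(\nu_{j^*j},2\sigma_0^2)$ on $[\epsilon,\infty)$, the tail estimate with $A_r$, and dropping $e^{-\epsilon^2/(4\sigma_0^2)}$. Your ``main obstacle'' is a symptom of Step 2, and no cleverer integration can remove it.

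The prior-free bound $\exp(-n_r\Delta^2/(4\Sigma))$ is too weak to produce the stated prior effect. Fix $\nu_{j^*j}>\epsilon$ and let $\sigma_0\to0$. Your pairwise integral equals $\mathbb{E}_{\Delta\sim\mathcal{N}(\nu_{j^*j},2\sigma_0^2)}[\mathbb{I}(\Delta>\epsilon)e^{-n_r\Delta^2/(4\Sigma)}]$, which tends to $e^{-n_r\nu_{j^*j}^2/(4\Sigma)}>0$. The theorem's right-hand side tends to $0$, since $C_{r,\epsilon}\le\sigma_0/(\epsilon\sqrt{\pi})$.

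This is why your fallbacks fail:
\begin{itemize}
\item Absorbing the cross term into $e^{-\Delta^2/(4\sigma_0^2)}$ just rebuilds $e^{-(\Delta-\nu_{j^*j})^2/(4\sigma_0^2)}$ and cancels the prior factor entirely.
\item Completing the square leaves only $\exp\bigl(-\tfrac{n_r\nu_{j^*j}^2}{4(\Sigma+n_r\sigma_0^2)}\bigr)$. The remaining Gaussian is centred at $\nu_{j^*j}/(4A_r\sigma_0^2)$, which may exceed $\epsilon$, so even the sampling effect can be lost.
\item The parenthetical ``$\nu_{j^*j}\le\epsilon$'' does not help: any $\nu_{j^*j}>0$ makes the cross term exceed $1$ on $[\epsilon,\infty)$.
\end{itemize}
Worse, for $\nu_{j^*j}<0$ the crude bound is not even valid. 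Shrinkage then pulls the posterior-mean difference toward the wrong sign; as $\sigma_0\to0$ the misranking probability tends to $1$. No constant can ``absorb'' that, so your restricted case is unproven as well.

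\textbf{The missing ingredient is exactly the shrinkage you chose not to track.} Your second fallback was the right idea, and it is what the paper uses. The paper takes from Lemma 3 of \citet{atsidakou-arxiv22a} the biased-gap bound $\mathbb{P}(\hat\mu_{j,r}\ge\hat\mu_{j^*,r}\mid\boldsymbol{\mu})\le\exp(-g_r)$, with $g_r=\frac{n_r\Delta^2}{4\Sigma}+\frac{\nu_{j^*j}\Delta}{2\sigma_0^2}$.

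You can derive this bound directly. Writing the posterior means with their shrinkage weights, misranking requires the empirical difference to fall below its mean by $t=\Delta+\frac{\Sigma\nu_{j^*j}}{n_r\sigma_0^2}$.
\begin{itemize}
\item If $t\ge0$, the Gaussian tail gives $\exp(-n_rt^2/(4\Sigma))=\exp\bigl(-g_r-\frac{\Sigma\nu_{j^*j}^2}{4n_r\sigma_0^4}\bigr)\le\exp(-g_r)$.
\item If $t<0$, one checks $g_r<0$ using $\Delta>0$, so the bound is trivial.
\end{itemize}
With this bound, the term $-\frac{\nu_{j^*j}\Delta}{2\sigma_0^2}$ cancels exactly against the $+\frac{\nu_{j^*j}\Delta}{2\sigma_0^2}$ from expanding the prior density. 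The exponent becomes $-A_r\Delta^2-\frac{\nu_{j^*j}^2}{4\sigma_0^2}$, and the rest of your Step 3 goes through verbatim for either sign of $\nu_{j^*j}$.
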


\begin{proof}
The algorithm proceeds in $R$ rounds. An error occurs if the optimal arm $j^*$ is eliminated by a suboptimal arm $j \in S_r$ based on their posterior means after collecting $n_r$ samples.

\paragraph{The Biased Gap (Fixed Instance Analysis)}
Fix the bandit instance $\boldsymbol{\mu}$. Following Lemma 3 of \citet{atsidakou-arxiv22a}, the probability that a suboptimal arm $j$ eliminates the optimal arm $j^*$ in round $r$ (using budget $n_r$) is bounded by the exponential of the biased gap function $g_r$:
\begin{equation}
    \mathbb{P}(j\in S{_r+1},  j^*\notin S_{r+1} \mid \boldsymbol{\mu}) \leq \exp \left( - g_r(j^*, j, \boldsymbol{\mu}) \right)
\end{equation}
where the biased gap depends on the round-specific budget $n_r$:
\begin{equation}
    g_r(j^*, j, \boldsymbol{\mu}) = \frac{n_r (\mu_{j^*} - \mu_j)^2}{4 \Sigma} + \frac{(\nu_{j^*} - \nu_j)(\mu_{j^*} - \mu_j)}{2\sigma_0^2}.
\end{equation}

\paragraph{Bayesian Integration Setup}
We calculate the expected risk by integrating the instance-dependent probability over the prior distribution. We restrict our attention to the region where the suboptimality gap exceeds $\epsilon$. Let $\Delta = \mu_{j^*} - \mu_j$. We integrate over the marginal prior for the gap, which is distributed as $\Delta \sim \mathcal{N}(\nu_{ij}, 2\sigma_0^2)$, where $\nu_{ij} = \nu_{j^*} - \nu_j$.
The instance-averaged probability of error for a specific pair $(j^*, j)$ is
\begin{equation}
    I_{ij}(\epsilon) = \int_{\epsilon}^{\infty} \frac{1}{\sqrt{4\pi\sigma_0^2}}\exp \left( - g_r(\Delta) \right)  \exp\left( - \frac{(\Delta - \nu_{ij})^2}{4\sigma_0^2} \right) d\Delta.
\end{equation}

\paragraph{Completing the Square}
We analyze the combined exponent of the integrand to solve the integral. Let $E(\Delta)$ denote the sum of the exponents from the likelihood (biased gap) and the prior distribution.
From Step 1, the likelihood exponent is $- \left( \frac{n_r \Delta^2}{4 \Sigma} + \frac{\nu_{ij}\Delta}{2\sigma_0^2} \right)$.
Summing this with the prior exponent $- \frac{(\Delta - \nu_{ij})^2}{4\sigma_0^2}$, we obtain:
\begin{equation}
    E(\Delta) = - \frac{n_r \Delta^2}{4 \Sigma} - \frac{\nu_{ij}\Delta}{2\sigma_0^2} - \frac{(\Delta - \nu_{ij})^2}{4\sigma_0^2}.
\end{equation}
Expanding the square in the prior term, the linear cross-terms involving $\Delta$ cancel out exactly. We group the remaining $\Delta^2$ terms and the constant term:
\begin{equation}
    E(\Delta) = - \Delta^2 \underbrace{\left( \frac{n_r}{4 \Sigma} + \frac{1}{4\sigma_0^2} \right)}_{A_r} - \frac{\nu_{ij}^2}{4\sigma_0^2}.
\end{equation}
This factorization allows us to pull the term independent of $\Delta$ (the prior separation) outside the integral:
\begin{equation}
    I_{ij}(\epsilon) = \frac{1}{\sqrt{4\pi\sigma_0^2}} 
\exp \left( - \frac{\nu_{ij}^2}{4\sigma_0^2} \right) 
\int_{\epsilon}^{\infty} \exp \left( - A_r \Delta^2 \right) d\Delta.
\end{equation}
We apply the Gaussian tail bound inequality $\int_{x}^{\infty} e^{-kt^2} dt \le \frac{1}{2kx} e^{-kx^2}$ for $x>0$. With $k=A_r$ and $x=\epsilon$, we have:
\begin{equation}
    \int_{\epsilon}^\infty e^{-A_r \Delta^2} d\Delta \leq \frac{1}{2 A_r \epsilon} \exp(-A_r \epsilon^2).
\end{equation}
Substituting the definition of $A_r$ into the exponent yields $\exp(-A_r \epsilon^2) = \exp\left( - \frac{n_r \epsilon^2}{4\Sigma} \right) \exp\left( - \frac{\epsilon^2}{4\sigma_0^2} \right)$.
Dropping the second exponential term (as it is $\le 1$) and defining the coefficient $C_r = \frac{1}{2 A_r \epsilon}$, we obtain the final bound for the pair $(j^*, j)$:
\begin{equation}
    I_{ij}(\epsilon) \leq C_{r, \epsilon} \cdot \exp \left( - \frac{(\nu_{j^*} - \nu_j)^2}{4\sigma_0^2} \right) \exp \left( - \frac{n_r \epsilon^2}{4 \Sigma} \right).
\end{equation}

\paragraph{Conclusion (Union Bound over Rounds)}
Unlike the fixed budget setting where $n$ is constant, here $n_r$ grows with $r$. The total error probability is the sum of error probabilities across all rounds $r=0 \dots R-1$. In each round, we sum over the active suboptimal arms:
\begin{equation}
    \mathbb{E}[\text{Error}] \leq \sum_{r=0}^{R-1} \sum_{j \in \mathcal{K}} I_{ij}(\epsilon).
\end{equation}
Substituting the bound for $I_{ij}(\epsilon)$ yields the theorem.

\end{proof}

\newpage
\subsection{Worst Case Probability Bound for Prior-Guided Successive Halving}\label{subsection:worst case appendix}

\begin{theorem}[Worst case probability bound]
    Algorithm \ref{alg:mf_prior_sh} identifies an $\epsilon$-best arm with probability at least $1-\delta$ if the overall budget $N$ fulfills
        \begin{equation}
           N \geq \max_{j \in \mathcal{K}\setminus\{j^*\}} \frac{4R\Sigma}{\max\{\epsilon, \mu_{j^*}-\mu_j\}^2} 
            \cdot \Bigg[ \ln\left(\frac{2\log_2(K)(\frac{K}{2}-1)}{\delta}\right) 
            - \frac{(\nu_{j^*}-\nu_j)\max\{\epsilon, \mu_{j^*}-\mu_j\}}{2\sigma_0^2} \Bigg].
        \end{equation}      
\end{theorem}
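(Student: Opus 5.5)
We prove the appendix version of Theorem~\ref{theorem worst case} directly for the fixed instance $\boldsymbol{\mu}$, without integrating over the prior as in Theorem~\ref{theorem:expected probability_appendix}. The starting point is the instance-dependent elimination bound from Lemma~3 of \citet{atsidakou-arxiv22a}, already used in the proof of Theorem~\ref{theorem:expected probability_appendix}:
\[
\mathbb{P}(j\in S_{r+1}, j^*\notin S_{r+1}\mid\boldsymbol{\mu})\le \exp(-g_r(j^*,j,\boldsymbol{\mu})),
\]
where
\[
g_r=\frac{n_r\Delta_j^2}{4\Sigma}+\frac{(\nu_{j^*}-\nu_j)\Delta_j}{2\sigma_0^2}.
\]
To handle the $\epsilon$-relaxation, we replace the gap $\Delta_j$ by $\Delta_{\epsilon,j}=\max\{\epsilon,\mu_{j^*}-\mu_j\}$. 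Arms with true gap at most $\epsilon$ are harmless: if one of them eliminates $j^*$, the returned arm may still be $\epsilon$-good. The error event $\mathcal{E}_\epsilon$ therefore only requires controlling eliminations of $j^*$ by arms with gap exceeding $\epsilon$, and for those arms $\Delta_{\epsilon,j}$ equals the true gap. This reduces the problem to a pairwise comparison $(j^*,j)$ with effective gap $\Delta_{\epsilon,j}$.

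Next we take a union bound over rounds and over the arms that can displace $j^*$. There are $R=\lceil\log_2 K\rceil$ rounds. The count $2(\frac{K}{2}-1)$ in the logarithm suggests the following reasoning: $j^*$ is only lost if it drops below the median of $S_r$, so it must be beaten by many arms. Following the SH argument of \citet{jamieson-aistats16a}, we bound the number of relevant competitors per round by $K/2-1$ (up to the factor $2$ absorbing a two-sided/ceiling slack). It then suffices to require, for every $j$ and $r$, that
\[
\exp(-g_r(j^*,j,\boldsymbol{\mu}))\le \frac{\delta}{2\log_2(K)(\frac K2-1)}.
\]
Summing these over all rounds and competitors keeps the total failure probability at most $\delta$.

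It remains to convert this per-round condition into a condition on $N$. We use $n_r=\lfloor N/(R|S_r|)\rfloor\ge N/(RK)$ in the worst round; more cleanly, the smallest per-arm allocation is attained when all arms are active. Taking logarithms, the requirement $g_r\ge\ln(\cdot)$ becomes
\[
\frac{n_r\Delta_{\epsilon,j}^2}{4\Sigma}\ge \ln\!\Big(\frac{2\log_2K(\frac K2-1)}{\delta}\Big)-\frac{(\nu_{j^*}-\nu_j)\Delta_{\epsilon,j}}{2\sigma_0^2}.
\]
Solving for the budget gives the stated factor $4R\Sigma/\Delta_{\epsilon,j}^2$, and maximizing over $j\neq j^*$ yields the theorem. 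If the bracket is negative, the prior alone already suffices and the condition holds trivially.

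The main obstacle is the budget-to-samples conversion. The statement carries only a factor $R$, not $R\cdot|S_r|$, so the argument has to interpret $N$ as a per-arm cumulative budget, or equivalently match $n_r$ to the paper's $N/R$ scaling. This must be done consistently with the union-bound count. I would first try to fix the convention that $N$ counts per-surviving-arm evaluations per round, as the stated bound implicitly does, and then verify that the $\lfloor\cdot\rfloor$ and the ceiling in $R$ only affect constants absorbed into the $2$ in the logarithm.
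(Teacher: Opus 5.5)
Your reduction of $\mathcal{E}_\epsilon$ to pairwise events is sound, and in the case where some arm other than $j^*$ is $\epsilon$-good it is exactly the paper's Case~2. State it via nestedness rather than ``may still be $\epsilon$-good'': if the output $J$ is $\epsilon$-bad, then $J\in S_{r+1}$ in the round where $j^*$ is dropped, so some bad arm outranked $j^*$ while both were active. The genuine gap is the union-bound count, which is what makes the total come out to $\delta$, and your justification for it does not hold.

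The ``$j^*$ must fall below the median'' reasoning is a Markov-type bound on how many arms beat $j^*$ in one round. It does not reduce the number of terms in a pairwise union bound. If all $K-1$ other arms are $\epsilon$-bad, round $0$ alone has $K-1$ bad competitors. A per-round count of $K-1$ then gives $\log_2(K)(K-1)\cdot\frac{\delta}{\log_2(K)(K-2)}=\delta\frac{K-1}{K-2}>\delta$.

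The paper avoids this with a case split:
\begin{itemize}
\item If every $j\neq j^*$ has gap $\ge\epsilon$, it only needs $j^*$ to survive. It applies Lemma~4 of \citet{atsidakou-arxiv22a}, a per-round bound of size $2\exp(-g)$ on $\mathbb{P}(j^*\notin S_{r+1}\mid j^*\in S_r,\boldsymbol{\mu})$, and unions over rounds only. This gives $\delta/(\frac K2-1)\le\delta$ for $K\ge4$.
\item If some arm other than $j^*$ is $\epsilon$-good, there are at most $K-2$ bad arms. The pairwise Lemma~3 bound, unioned over these arms and the $\log_2 K$ rounds, has exactly $\log_2(K)(K-2)=2\log_2(K)(\frac K2-1)$ terms. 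That, not a median or ceiling slack, is where the constant comes from.
\end{itemize}
Counting only active competitors, $\sum_r(|S_r|-1)=2K-2-\log_2 K\le\log_2(K)(K-2)$ for $K=2^R\ge4$, could also close the gap. But $S_r$ is random and built from the same data the GP reuses, so that route needs a conditioning argument you do not supply.

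On the budget conversion, drop the step $n_r\ge N/(RK)$: it yields $4RK\Sigma/\Delta_{\epsilon,j}^2$, not the stated factor. The paper resolves this exactly by the convention you fall back on at the end. It writes $N\Delta_{\epsilon,j}^2/(4R\Sigma)$ in the exponent, i.e.\ it identifies the per-arm sample count with $N/R$ without deriving this from $n_r=\lfloor N/(R|S_r|)\rfloor$. So here you agree with the paper. Be aware, though, that neither argument establishes the bound for the literal allocation of Algorithm~\ref{alg:mf_prior_sh}, whose per-arm count is smaller by a factor $|S_r|$ (a factor $K$ in round $0$).
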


\begin{proof}
    \underline{Case 1:} 
Let for all $j\in \mathcal{K}\backslash\{j^*\}$: $\mu_{j^*}-\mu_j \geq \epsilon$.\\
    Then we have 
    \begin{equation}\label{Eq:BudgetCase1}
        N \geq  \max_{j \in \mathcal{K}\setminus\{j^*\}} \frac{4R\Sigma}{(\mu_{j^*}-\mu_j)^2} 
         \cdot \Bigg[ \ln\left(\frac{2\log_2(K)(\frac{K}{2}-1)}{\delta}\right) \\
        - \frac{(\nu_{j^*}-\nu_j)(\mu_{j^*}-\mu_j)}{2\sigma_0^2} \Bigg] = (*).
    \end{equation}
    Let us fix in the following arm $j \in \mathcal{K}\backslash\{j^*\}$ that maximizes $(*)$ as $j_{max} \in \argmax_{j \in \mathcal{K}\backslash\{j^*\}}(*)$. Then Equation $\ref{Eq:BudgetCase1}$ is equivalent to
    \begin{align*}
    & N\frac{(\mu_{j^*}-\mu_{j_{\max}})^2}{4R\Sigma} \\
    &\quad \geq \ln\!\left(\frac{2\log_2(K)(\frac{K}{2}-1)}{\delta}\right) - \frac{(\nu_{j^*}-\nu_{j_{\max}})(\mu_{j^*}-\mu_{j_{\max}})}{2\sigma_0^2} \\[6pt]
    \Leftrightarrow\quad & {-}N\frac{(\mu_{j^*}-\mu_{j_{\max}})^2}{4R\Sigma} - \frac{(\nu_{j^*}-\nu_{j_{\max}})(\mu_{j^*}-\mu_{j_{\max}})}{2\sigma_0^2} \\
    &\quad \leq \ln\!\left(\frac{\delta}{2\log_2(K)(\frac{K}{2}-1)}\right) \\[6pt]
    \Leftrightarrow\quad & 2\exp\!\Bigg({-}N\frac{(\mu_{j^*}-\mu_{j_{\max}})^2}{4R\Sigma} - \frac{(\nu_{j^*}-\nu_{j_{\max}})(\mu_{j^*}-\mu_{j_{\max}})}{2\sigma_0^2}\Bigg) \\
    &\quad \leq \frac{\delta}{\log_2(K)(\frac{K}{2}-1)}
    \end{align*}
    Applying Lemma 4 from \citet{atsidakou-arxiv22a} leads to
    \begin{align*}
        \mathbb{P}\left(j^* \notin S_{r+1} | j^* \in S_{r}, \mathbf{\mu}\right) \leq \frac{\delta}{\log_2(K)(\frac{K}{2}-1)}.
    \end{align*}
    By the union bound over all rounds $r \in \{1,\dots,\log_2(K)\}$, for $K\geq 4$, we get an overall worst case bound for the misidentification probability of 
    \begin{align*}
        \mathbb{P}(j^* \notin S_R \mid \mu) &= \mathbb{P}\!\left(\bigcup_{r=1}^{R} j^* \in S_{r-1},\, 
        j^* \notin S_r \,\middle|\, \mu\right)
        \\&\leq \sum_{r=1}^{R} \mathbb{P}(j^* \notin S_r 
        \mid j^* \in S_{r-1}, \mu)\mathbb{P}(j^* \in S_{r-1} \mid \mu) \\&\leq \sum_{r=1}^{R} 
        \mathbb{P}(j^* \notin S_r \mid j^* \in S_{r-1}, \mu)
        \\&\leq \frac{\delta}{(\frac{K}{2}-1)} \leq \delta
    \end{align*}
    \\
    \newpage
    \underline{Case 2:}

Let $\epsilon \geq \mu_{j^*} - \mu_j$ for some $j \in \mathcal{K}$.\\
    For all arms $\tilde{j} \in \mathcal{K}$ for which $\mu_{j^*}-\mu_{\tilde{j}} \geq \epsilon$ holds, we also have
    \begin{align*}
        N & \geq \max_{j \in \mathcal{K}\setminus\{j^*\}} \frac{4R\Sigma}{\epsilon^2} \Bigg[ \ln\left(\frac{2\log_2(K)(\frac{K}{2}-1)}{\delta}\right)  - \frac{(\nu_{j^*}-\nu_j)\epsilon}{2\sigma_0^2} \Bigg] \\
        & \geq \frac{4R\Sigma}{(\mu_{j^*} - \mu_{\tilde{j}})^2} \Bigg[ \ln\left(\frac{2\log_2(K)(\frac{K}{2}-1)}{\delta}\right)  - \frac{(\nu_{j^*}-\nu_{\tilde{j}})(\mu_{j^*}-\mu_{\tilde{j}})}{2\sigma_0^2} \Bigg] \\
        \Leftrightarrow ~ & \exp\Bigg(-N\frac{(\mu_{j^*}-\mu_{\tilde{j}})^2}{4R\Sigma}  - \frac{(\nu_{j^*}-\nu_{\tilde{j}})(\mu_{j^*}-\mu_{\tilde{j}})}{2\sigma_0^2}\Bigg)\\ 
        & \leq \frac{\delta}{2\log_2(K)(\frac{K}{2}-1)}.
    \end{align*}    
    Lemma 3 by \citet{atsidakou-arxiv22a} gives us $\mathbb{P}(\hat{\mu}_{\tilde{j},r} > \hat{\mu}_{j^*,r}) \leq \frac{\delta}{\log_2(K)(K-2)} $ for a fixed but random round $r \in \{1,\dots,\log_2(K)\}$. We know by the case assumption that there is at least one arm $j \in \mathcal{K}\backslash\{j^*\}$ for which $\epsilon \geq \mu_{j^*} - \mu_j$. Taking the union bound over all other arms in $\mathcal{K}\backslash\{j^*\}$ and over all rounds $r \in \{1,\dots,\log_2(K)\}$, we can bound the probability that one of these arms are estimated better than $j^*$ during the whole run of the algorithm by $\delta$. 
    For all other arms we have $\mu_{j^*}-\mu_j \leq \epsilon$ and thus $j$ is $\epsilon$-optimal and is considered as a valid return of our algorithm.
\end{proof}


\paragraph{Proof of condition~(ii).}
By Assumption~\ref{ass:gp}, the GP posterior standard deviation satisfies 
$\sigma_{n_r}(t) \in \mathcal{O}(\log(n_r)^{-1/2-\varepsilon})$ for some 
$\varepsilon > 0$ for both kernels used in our experiments. Under this 
condition, \citet{lederer2019uniform} (Theorem~3.3) guarantees that 
$\xi(n_r) \to 0$ as $n_r \to \infty$. Therefore there exists a finite
\begin{equation}
    n_{\mathrm{GP}} := \min\{n \in \mathbb{N} : \xi(n) \leq \epsilon/4\}
\end{equation}
such that for all $n_r \geq n_{\mathrm{GP}}$, the GP approximation error 
is negligible relative to $\epsilon$. The threshold $\epsilon/4$ follows 
directly from the GP-adjusted gap, defined as
\begin{equation}
    \tilde{\Delta}_{j,r} := \max\{\epsilon, \hat{\mu}_{j^*,r} - 
    \hat{\mu}_{j,r} - 2\xi(n_r)\},
\end{equation}
which deflates the observed gap by $2\xi(n_r)$ to account for the GP 
approximation error on both sides. Requiring $\xi(n_r) \leq \epsilon/4$ 
ensures $2\xi(n_r) \leq \epsilon/2$, so that
\begin{equation}
    \tilde{\Delta}_{j,r} \geq \epsilon - 2\xi(n_r) \geq \epsilon/2 > 0.
\end{equation}
Once $n_r \geq n_{\mathrm{GP}}$, the difference between the GP posterior 
mean and the true arm mean is at most $\epsilon/4$, making the GP 
predictions sufficiently accurate for our stopping criterion to safely be invoked.

\newpage
\subsection{Comparison of Prior-Guided Successive Halving to Standard Successive Halving}\label{appendix:comparison of sh bounds}
We establish the theoretical conditions under which our proposed prior-guided Successive Halving (PSH) results in sampling budget savings compared to the standard SH baseline. By equating the sufficient budget condition of the standard algorithm with the worst case complexity bound of our prior-guided approach in Theorem \ref{theorem worst case}, we derive the requirements for the quality of the prior distribution. Relative to the observation noise and instance hardness, we isolate the minimum prior gap $\nu_{j^*}-\nu_j$ necessary to guarantee reduction of the total evaluation budget due to the prior.

For readability, let $\Delta_{\epsilon,j} = \max\{\epsilon, \mu_{j^*}-\mu_j\}$ and $\tilde{\Delta}_{j,r} \leftarrow \max\{\epsilon, \hat{\mu}_{j^*,r} - \hat{\mu}_{j,r} - 2\xi(n_r)\}$.

Under Assumption~\ref{ass:gp} and condition~(ii) of
Theorem~\ref{theorem worst case}, $|\tilde{\Delta}_{j,r} - \Delta_{\epsilon,j}|
\leq 2\xi(n_r) \leq \epsilon/2$. The following analysis treats
$\tilde{\Delta}_{j,r}$ as the effective gap.

Hyperband requires the means to be sorted in ascending order by closeness to the optimum. Hence, let $\mu_{j^*}= \mu_1\geq\mu_2\geq...\geq\mu_{K}$.
\begin{align}
    N_{SH} &=\,  2R\max_{{k\in\mathcal{K}\setminus \{j^*\}}} k\,\left( 1+ \frac{\ln(\frac{2}{\delta})}{2\max \left\{ \frac{\epsilon}{2}, \frac{\mu_{j^*} - \mu_k}{2} \right\}^2}  \right) 
    \\&\geq \max_{j \in \mathcal{K}\setminus\{j^*\}} \frac{4R\Sigma}{\Delta_{\epsilon,j}^2} 
              \Bigg[ \ln\left(\frac{2R(\frac{K}{2}-1)}{\delta}\right) 
             - \frac{(\nu_{j^*}-\nu_j)\Delta_{\epsilon,j}}{2\sigma_0^2} \Bigg]
    \end{align}
Note that the arm maximizing the standard SH bound (LHS) is not necessarily the same arm maximizing our prior-guided bound (RHS). Since the sample complexity on the RHS scales with the negative gap $-\Delta_{\epsilon,j}$, the maximum occurs at the arm with the minimal gap to the optimum. By the above ordering of the arms, this is $\mu_2\in\arg\min_{j\in\mathcal{K}\setminus \{j^*\}} \Delta_{\epsilon,j}$. Since the budget $N_{SH}$ is defined as the maximum cost over all arms, it is larger than or equal to the budget for any given arm $j\in\mathcal{K}\setminus \{j^*\}$, especially also for $\mu_2$. Therefore, the inequality holds provided that the standard SH cost for $\mu_2$ is greater than the highest cost of our prior-guided approach. In the following, we fix $j=2$ and solve the inequality for the required prior gap.
\begin{align*}
        N_{SH} &\geq\,  2R \, j\,\left( 1+ \frac{\ln(\frac{2}{\delta})}{2\max \left\{ \frac{\epsilon}{2}, \frac{\mu_{j^*} - \mu_{j}}{2} \right\}^2}  \right) 
    &&\geq \frac{4R\Sigma}{\Delta_{\epsilon,j}^2} 
              \Bigg[ \ln\left(\frac{2R(\frac{K}{2}-1)}{\delta}\right) 
             - \frac{(\nu_{j^*}-\nu_{j})\Delta_{\epsilon,j}}{2\sigma_0^2} \Bigg]\\
    &\Leftrightarrow\quad   \frac{j\Delta_{\epsilon,j}^2}{2\Sigma}\left( 1+ \frac{2\ln(\frac{2}{\delta})}{\Delta_{\epsilon,j}^2}  \right) 
    &&\geq\quad \ln\left(\frac{2R(\frac{K}{2}-1)}{\delta}\right) 
             - \frac{(\nu_{j^*}-\nu_{j})\Delta_{\epsilon,j}}{2\sigma_0^2} \\
    &\Leftrightarrow \quad  \frac{j\Delta_{\epsilon,j}^2}{2\Sigma} + \frac{j\ln(\frac{2}{\delta})}{\Sigma}   
    &&\geq\quad \ln\left(\frac{2R(\frac{K}{2}-1)}{\delta}\right) 
             - \frac{(\nu_{j^*}-\nu_{j})\Delta_{\epsilon,j}}{2\sigma_0^2} \\
    &\Leftrightarrow \quad \frac{(\nu_{j^*}-\nu_{j})\Delta_{\epsilon,j}}{2\sigma_0^2} &&\geq\quad \ln\left(\frac{2R(\frac{K}{2}-1)}{\delta}\right) - \frac{j}{2\Sigma}\left(\Delta_{\epsilon,j}^2 + 2\ln\left(\frac{2}{\delta}\right)\right)\\
    &\Leftrightarrow \quad (\nu_{j^*}-\nu_{j}) &&\geq\quad \frac{\sigma_0^2}{\Sigma\Delta_{\epsilon,j}} \left[2\Sigma\ln\left(\frac{2R(\frac{K}{2}-1)}{\delta}\right) - j\left(\Delta_{\epsilon,j}^2 + 2\ln\left(\frac{2}{\delta}\right)\right)\right]
\end{align*}

Taking a union bound over all $RK$ arm-round pairs, the GP approximation 
error under Assumption~\ref{ass:gp} contributes an additive $2RK\delta_{\mathrm{GP}}$ 
to the total misidentification probability. Setting $\delta_{\mathrm{GP}} = \delta/(4RK)$ 
and replacing $\delta$ with $\delta/2$ throughout absorbs this term, yielding 
the stated bound with confidence $1 - \delta$.

\newpage
\subsection{Example}\label{appendix::example}
Consider a problem instance where $\ell_{i,t}$ is the sum of $t$ independent random variables for any $t \in \mathbb{N}$ with common mean $\mu_i$. In addition, assume each summand of $\ell_{i,t}$ is bounded and without loss of generality in $[0,1]$. Then, according to the Hoeffding inequality, we have
\begin{align*}
    \mathbb{P}\left(\left|\ell_{i,t} - t\mu_i \right| \geq c\right) \leq 2\exp\left(-\frac{2c^2}{t}\right).
\end{align*}
Dividing by $t$, with probability at least $1-\delta$, we have
\begin{align*}
    \left|\frac{\ell_{i,t}}{t} - \mu_i\right| \leq \gamma(t) := \sqrt{\frac{1}{2t}\ln\left(\frac{2}{\delta}\right)}.
\end{align*}
Computing the inverse gives us $\gamma^{-1}(y) = \frac{\ln(2/\delta)}{2y^2}$, which results in the following sample complexity bound for SH
\begin{equation}
\begin{split}\label{equation:hyperband bound example}
    N_{SH} = \,  2R 
      \max_{{j\in\mathcal{K}\setminus j^*}} j\,\left( 1+ \frac{\ln(\frac{2}{\delta})}{2\max \left\{ \frac{\epsilon}{2}, \frac{\mu_{j^*} - \mu_j}{2} \right\}^2}  \right)
\end{split}
\end{equation}
to identify the best arm with probability at least $1-\delta$.
%
%

Comparing the standard SH budget $N_{SH}$ with the bound from Theorem~\ref{theorem worst case}, denoted $N_{prior}$, highlights the budget savings offered by PSH (Algorithm \ref{alg:mf_prior_sh}) under informative priors. 
We characterize these savings depending on prior quality by analyzing when $N_{prior} < N_{SH}$ holds by solving:

\begin{equation}
\begin{split}
    \max_{j \in \mathcal{K}\setminus\{j^*\}} \frac{4R\Sigma}{\max\{\epsilon, \mu_{j^*}-\mu_j\}^2} 
    \quad & \Bigg[ \ln\left(\frac{2R(\frac{K}{2}-1)}{\delta}\right) - \frac{(\nu_{j^*}-\nu_j)\max\{\epsilon, \mu_{j^*}-\mu_j\}}{2\sigma_0^2} \Bigg] \\
    &\overset{!}{\leq} 2R\max_{{j\in\mathcal{K}\setminus j^*}} j\,\left( 1+ \frac{\ln(\frac{2}{\delta})}{2\max \left\{ \frac{\epsilon}{2}, \frac{\mu_{j^*} - \mu_j}{2} \right\}^2} \right) = N_{SH}
\end{split}
\end{equation}

\subsection{Prior Efficiency Condition}\label{appendix:prior efficiency}
\begin{corollary}[Prior Efficiency Condition]
\label{cor:prior_efficiency}
Let $N_{SH}$ be the standard Successive Halving budget and $N_{prior}$ be the sufficient budget derived in Theorem \ref{theorem worst case}. A sufficient condition for the prior-guided approach to yield budget savings (i.e., $N_{prior} < N_{SH}$) is that the prior bias satisfies:
\begin{equation}
    \nu_{j^*} - \nu_j \geq \frac{\sigma_0^2}{\Sigma \Delta_{\epsilon,j}} \left[ C_{\text{alg}}(\delta) - C_{\text{data}}(j, \delta) \right],
\end{equation}
where the algorithmic overhead $C_{\text{alg}}$ and instance-dependent complexity $C_{\text{data}}$ are defined as:
\begin{align*}
    C_{\text{alg}}(\delta) = 2\Sigma\ln\left(\frac{2R(\frac{K}{2}-1)}{\delta}\right), \quad
    C_{\text{data}}(j, \delta) = j\left(\Delta_{\epsilon,j}^2 + 2\ln\left(\frac{2}{\delta}\right)\right).
\end{align*}
\end{corollary}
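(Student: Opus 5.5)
The plan is to reduce the claim to a single-arm comparison and then solve a linear inequality in the prior gap. By definition, $N_{prior}$ is the maximum over $j\in\mathcal{K}\setminus\{j^*\}$ of the Theorem~\ref{theorem worst case} expression
\[
\frac{4R\Sigma}{\Delta_{\epsilon,j}^2}\Bigl[\ln\bigl(\tfrac{2R(\frac K2-1)}{\delta}\bigr)-\tfrac{(\nu_{j^*}-\nu_j)\Delta_{\epsilon,j}}{2\sigma_0^2}\Bigr].
\]
Here $\Delta_{\epsilon,j}$ stands in for $\tilde{\Delta}_{j,r}$. This is legitimate under condition~(ii) and Assumption~\ref{ass:gp}, as argued just above. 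The quantity $N_{SH}$ is a maximum over arms of $2Rk\bigl(1+\ln(2/\delta)/(2\max\{\epsilon/2,(\mu_{j^*}-\mu_k)/2\}^2)\bigr)$. So it dominates the SH term of any single arm, in particular that of arm $j$.

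It therefore suffices to prove, for every $j\neq j^*$, that the SH term of arm $j$ is at least the PSH term of arm $j$. The statement is read with the condition holding for each $j$, or at least for the arm attaining the maximum in $N_{prior}$. Then
\[
N_{prior}=\max_j(\text{PSH}_j)\le \max_j(\text{SH}_j)=N_{SH}.
\]
This is slightly cleaner than the paper's route of fixing $j=2$. I would use the per-arm version so that the ordering argument about which arm maximizes is not needed. I would still remark that, after ordering $\mu_1\ge\mu_2\ge\dots$, the binding arm is typically $j=2$.

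The core step is the per-arm algebra. First rewrite the SH term using $\max\{\epsilon/2,(\mu_{j^*}-\mu_j)/2\}=\Delta_{\epsilon,j}/2$, which gives $2Rj\bigl(1+2\ln(2/\delta)/\Delta_{\epsilon,j}^2\bigr)$. Then multiply both sides by the positive factor $\Delta_{\epsilon,j}^2/(4R\Sigma)$. The inequality becomes
\[
\frac{j\Delta_{\epsilon,j}^2}{2\Sigma}+\frac{j\ln(2/\delta)}{\Sigma}\ \ge\ \ln\Bigl(\frac{2R(\frac K2-1)}{\delta}\Bigr)-\frac{(\nu_{j^*}-\nu_j)\Delta_{\epsilon,j}}{2\sigma_0^2}.
\]
Next isolate the prior term and multiply by $2\sigma_0^2/\Delta_{\epsilon,j}>0$, using $\Delta_{\epsilon,j}\ge\epsilon>0$. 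This yields exactly
\[
\nu_{j^*}-\nu_j\ \ge\ \frac{\sigma_0^2}{\Sigma\Delta_{\epsilon,j}}\bigl[C_{\text{alg}}(\delta)-C_{\text{data}}(j,\delta)\bigr].
\]
All manipulations are multiplications by strictly positive quantities, so they are equivalences. Hence the hypothesis implies the per-arm inequality, and the maximum comparison above closes the argument.

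The main obstacle is the strictness and the max–max mismatch, not the algebra. The hypothesis gives only $\ge$, so $N_{prior}\le N_{SH}$ follows directly. For strict savings I would either require strict inequality in the prior-gap condition, or note that the SH term has the additive $2Rj$ constant, which slack can absorb. A second issue is that the condition must control the arm maximizing $N_{prior}$, not an arbitrary $j$. I would state it for all $j$, or for that maximizing arm, and point out that a nonpositive right-hand side makes the condition vacuous. Finally, I would mention the $\delta/2$ and $\delta_{\mathrm{GP}}=\delta/(4RK)$ bookkeeping from the preceding paragraph, so that the comparison is made at matched confidence.
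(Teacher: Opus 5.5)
Your proof is correct. Its core is the same as the paper's. You rewrite $\max\{\epsilon/2,(\mu_{j^*}-\mu_j)/2\}=\Delta_{\epsilon,j}/2$, multiply by $\Delta_{\epsilon,j}^2/(4R\Sigma)$, isolate the prior term, and multiply by $2\sigma_0^2/\Delta_{\epsilon,j}$. This is exactly the paper's chain of equivalences, and your displayed intermediate inequality is its second line.

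The real difference is how the max-versus-max comparison is reduced to a single arm:
\begin{itemize}
\item \textbf{The paper} fixes $j=2$, asserting that the prior-guided bound is maximized at the arm with the smallest gap. That assertion ignores the arm-dependent term $-(\nu_{j^*}-\nu_j)\Delta_{\epsilon,j}/(2\sigma_0^2)$, which can move the maximizer. For example, take $\nu_{j^*}-\nu_2$ large but $\nu_{j^*}-\nu_3=0$, with $\Delta_{\epsilon,3}$ only slightly larger than $\Delta_{\epsilon,2}$. Then the maximizer is arm $3$, and a condition imposed only at $j=2$ says nothing about $N_{prior}$. The paper's reduction is therefore sound only when the prior term does not reorder the per-arm bounds, e.g.\ for a uniform prior.
\item \textbf{Your per-arm domination} ($\mathrm{PSH}_j\le\mathrm{SH}_j$, hence $\max_j\mathrm{PSH}_j\le\max_j\mathrm{SH}_j$) is airtight. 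The price is a stronger hypothesis: the condition must hold at every arm, or at an arm identified only after computing $N_{prior}$, rather than at one fixed arm.
\end{itemize}

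\textbf{Strictness.} Your first fix, assuming strict inequality in the prior-gap condition, is the right one. The paper's derivation also yields only $\le$, so it needs the same fix. Your alternative does not work: the additive $2Rj$ in the SH term is not spare slack. That constant is exactly what becomes the $j\Delta_{\epsilon,j}^2$ part of $C_{\text{data}}(j,\delta)$. Since every step is an equivalence, equality in the hypothesis gives $\mathrm{PSH}_j=\mathrm{SH}_j$ exactly.

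\textbf{Gap substitution.} Condition (ii) only gives $|\tilde{\Delta}_{j,r}-\Delta_{\epsilon,j}|\le\epsilon/2$, so it does not make replacing $\tilde{\Delta}_{j,r}$ by $\Delta_{\epsilon,j}$ exact. The clean justification, which the paper also uses implicitly, is to take $N_{prior}$ in the true-gap form of the appendix version of the worst-case theorem.
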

Appendix \ref{appendix:comparison of sh bounds} includes the derivation of Corollary \ref{cor:prior_efficiency}.

\paragraph{Interpretation} Corollary \ref{cor:prior_efficiency} decouples the influencing factors of the budget savings. $C_{\text{alg}}$ represents the fixed overhead of the Bayesian inference mechanism, while $C_{\text{data}}(j)$ captures the hardness of distinguishing arm $j$. The inequality explicitly demonstrates that the required prior signal strength $(\nu_{j^*} - \nu_j)$ scales linearly with the prior uncertainty $\sigma_0^2$ and inversely with the problem gap $\Delta_{\epsilon,j}$. Thus, for easier problems, where $C_{\text{data}}$ is large or priors are sharper, i.e., $\sigma_0^2$ is small, the threshold for savings is accordingly lower.

We analyze Corollary~\ref{cor:prior_efficiency} to examine how the required prior 
gap $(\nu_{j^*}-\nu_j)$ shifts across precision ($\epsilon$), reliability ($\delta$), 
and belief strength ($\sigma_0$). Results are provided in 
Appendix~\ref{sec:appendix min prior gap plot}.

\newpage

\section{Details on the Experimental Setup and Prior Generation}\label{sec:appendix experimental setup}
We executed our experiments on a high performance computing cluster, using two AMD EPYC Milan 7763 CPUs with four GB memory each and 	
2.45 GHz base frequency.
\subsection{Benchmarks}\label{subsec:appendix benchmark}
\textbf{Synthetic Benchmark}
We define a synthetic search space where each arm $j$ is characterized by a logistic-like saturating learning curve approaching its true final mean $f_j(B)$. The underlying function is modeled as
\begin{equation}
    f(t) = \mu_j(1-\exp(-t/\tau)),
\end{equation}
where $\mu_j\in[0,1]$ is sampled uniformly and the time constant $\tau$ varies linearly with the arm index to simulate diverse learning dynamics. The maximum fidelity $B$ is set equal to $K=256$.
As discussed in Section \ref{sec:prior guided sh}, for the kernel-based projection of performance estimates on the maximum fidelity $B$ via learning curves, we use a saturating kernel
\begin{equation*}
    k_\mathrm{satexp} = \sigma^2f(t)f(t').
\end{equation*}
This non-stationary kernel models learning curves that exhibit diminishing returns with monotonic growth and a saturation scale. For smoothness, we add a Radial Basis Function (RBF) kernel~\citep{rasmussen-book06a}. The sum of the two kernels yields a non-stationary but flexible GP prior that extrapolates learning curves accurately by combining global saturation behavior with local smooth deviations. We evaluate the synthetic benchmark for 20 seeds.

\begin{figure}[H]
    \centering
    \includegraphics[width=0.6\linewidth]{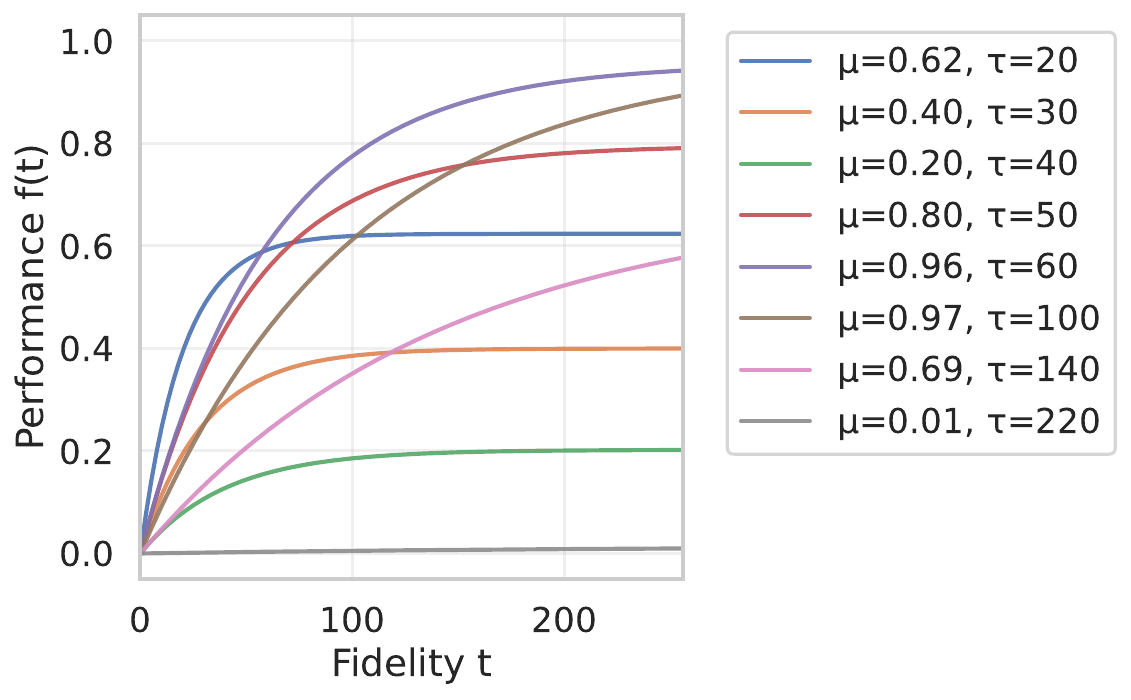}
    \caption{Learning curves of arms in the synthetic benchmark. Each arm~$j$ 
follows a smooth saturating function 
$f_j(t)$, 
where $\mu_j \in [0,1]$ is the arm's true final performance randomly sampled and the time constant 
increases linearly with the arm index ($\tau_j = 20 + 10 \cdot j$). }
    \label{fig:learning curves}
\end{figure}

\textbf{LCBench}
We use the LCBench suite~\citep{zimmer-tpami21a} from YAHPO Gym~\citep{pfisterer-automl22a}, a learning curve benchmark that contains tabular performance data for neural network configurations. We evaluate 20 seeds for each of the 34 instances. We set the maximum fidelity to $B=52$ and the number of arms to $K=256$.
For LCBench, we use a linear kernel for the learning curve modeling. Empirically, this is a good fit for the GP prediction performance-wise compared to more dynamic choices of kernels.
\subsection{Prior Generation}\label{sec:appendix prior generation}
To characterize the ways external knowledge about configuration quality can be integrated into our optimization process, we consider distinct ways of prior generation. These priors vary in terms of informativeness, sorted from highly accurate and useful to intentionally misleading.

Notably, the algorithm does not terminate within a fidelity, since the sufficient budget in Theorem \ref{theorem worst case} involves maximizing over all arms in a specific fidelity. 

\textbf{Rank-based Prior}
This strategy maps each arm $j\in\mathcal{K}$ to its inverse rank based on true final performance:
\begin{equation}
    \nu_j = 1/(\mathrm{rank}_j+1),
\end{equation}
leveraging ordinal information without requiring exact performance values.

\textbf{Performance-based Prior}
To simulate noisy expert estimates, the prior samples means from a normal distribution centered on the actual performance, representing a scenario where the user has a good but imperfect understanding of the final configuration performance:
\begin{equation}
    \nu_j\sim\mathcal{N}(f_j(B),\sigma_0^2).
\end{equation}

\textbf{Indicator Prior}
This binary strategy assigns a value of 1 to arms within an $\ epsilon$-neighborhood of the global optimum, and 0 otherwise. It tests efficiency gains provided that an expert can correctly identify regions of interest:
\begin{equation}
    \nu_j = \mathbbm{1}[f^*-f_j(B)\leq\epsilon].
\end{equation}

\textbf{Uniform Prior}
Serving as a non-informative baseline, this prior assigns the mean of all true final performance values to each arm.
\begin{equation}
\nu_{j} = \frac{1}{|\mathcal{K}|} \sum_{i \in \mathcal{K}} f_i(B).
\end{equation}
\textbf{Inverse Rank Prior}
To empirically validate the robustness of PSH (Section \ref{subsec:robustnes}), we define a misleading prior that assigns the highest means to the poorest performing arms:
\begin{equation}
    \nu_j = (\mathrm{rank}_j+1)/\mid\mathcal{K}\mid.
\end{equation}
Thus, PSH has to recover from instantiation with misleading information.

\subsection{Robustness of Prior-Guided Successive Halving}\label{subsec:robustnes_appendix}
While informative priors are utilized to accelerate our approach, practical use cases may involve uncertain or misleading user beliefs. We therefore examine the behavior of Prior-Guided Successive Halving (PSH) under different prior qualities to assess its robustness in varying settings.

\paragraph{Uninformative Priors}
We consider the uninformative setting, represented by a uniform prior with mean $\nu_j=\nu \,\,\forall j\in[K]$. In this scenario, $\nu_{j^*}-\nu_j = 0\,\,\forall j\in[K]$ holds. Consequently, the prior aligning term $\frac{(\nu_{j^*}-\nu_j)\max\{\epsilon, \mu_{j^*}-\mu_j\}}{2\sigma_0^2}$ vanishes entirely from the complexity bound. The result reduces to a term independent of the prior, recovering the behavior of standard SH. Thus, PSH defaults to the frequentist strategy when the prior provides no discerning information.

\paragraph{Misleading Priors}
Having shown how the prior facilitates efficiency gains in the favorable setting where priors are informative, we now address the misleading setting, specifically when the prior suggests a suboptimal arm $j$, i.e., $\nu_j > \nu_{j^*}$. Theorem \ref{theorem worst case} quantifies the cost of such errors, defining the sufficient budget $N$ for $\epsilon$-Best Arm Identification (BAI) with probability $1-\delta$ by decoupling the complexity into a standard logarithmic term and a prior aligning term.
In the misleading setting, the prior gap term $(\nu_{j^*} - \nu_j)$ is negative, turning the prior alignment term into a positive penalty for the budget. This scales linearly with the magnitude of error. This ensures the algorithm does not fail, but merely requires a proportional increase in budget.

The bound uses the prior variance $\sigma^2_0$ to prevent unbounded costs. If the magnitude of the prior error $(\nu_{j^*} - \nu_j)$ is potentially large, the user can increase $\sigma^2_0$ to incorporate uncertainty. In the limit, the penalty vanishes, allowing the user and algorithm to recover frequentist performance.

This way of incorporating Theorem \ref{theorem worst case} into PSH (Algorithm \ref{alg:mf_prior_sh}) as an early-stopping mechanism prevents the sampling budget from increasing beyond the pre-defined standard SH budget. PSH (Algorithm \ref{alg:mf_prior_sh}) cannot use more budget than $N$ in the misleading setting, but offers considerable speedup in the informative setting, as shown in Section \ref{sec:emipirical_results}.

\subsection{GP Approximation Across Fidelities}
We emphasize that the GP serves as a per-arm extrapolation tool. For each arm $j$, a GP is fit independently to the observed fidelity-performance pairs $\mathcal{D}_j = \{(t, y_{j,t})\}_{t=1}^{N_{\text{used}_j}}$, with fidelity as the input, and queried at $t = B$ to obtain the posterior mean $\hat{\mu}_{j,r}$ and variance $\sigma_{j,r}^2$. No information is shared across arms. This makes the approach lightweight and straightforward to integrate into SH and Hyperband.

\newpage
\section{Multi-Fidelity Prior-Guided Successive Halving (PSH) Algorithm}
\label{sec:appendix algorithm}
We provide the pseudo-code for the implementation of the Multi-Fidelity Prior-Guided Successive Halving (PSH) algorithm. PSH extends the standard Successive Halving framework by incorporating a dynamic, prior-dependent stopping criterion. In each round, the algorithm uses a Gaussian Process with an Inverse Power-Law (IPL) kernel to project observations to the target fidelity $B$ and identifies a lead candidate, also called incumbent,  $j^*$. It then evaluates a stopping threshold $N_\mathrm{stop}$, derived from the current predicted mean gaps and the injected prior belief. Define
\begin{equation}
    \tilde{\Delta}_{j,r} \leftarrow \max\{\epsilon, \hat{\mu}_{j^*,r} - 
\hat{\mu}_{j,r}\} \quad\forall j \in S_r \setminus \{j^*\}.
\end{equation}
This allows the algorithm to terminate early if the evidence sufficiently distinguishes the top configuration, while otherwise defaulting to robust pruning according to the worst case analysis in Theorem \ref{theorem worst case}:
\begin{equation*}
    N_\mathrm{stop} \geq \max_{j \in S_r \setminus \{j^*\}} 
\frac{4R\Sigma}{\tilde{\Delta}_{j,r}^2} 
\cdot \Bigg[ \ln\left(\frac{2R(\frac{K}{2}-1)}{\delta}\right) 
- \frac{(\nu_{j^*}-\nu_j)\tilde{\Delta}_{j,r}}{2\sigma_0^2} \Bigg].
\end{equation*}

\begin{algorithm}[ht]
\caption{Multi-Fidelity Prior-Guided Successive Halving (PSH)}
\label{alg:mf_prior_sh}
\begin{algorithmic}[1]
\REQUIRE Arms $\mathcal{K}$, total budget $N$,
         priors $(\nu_j, \sigma_0^2)_{j \in \mathcal{K}}$, target fidelity $B$,
         learning curve kernel $k$, elimination rate $\eta$, confidence $\delta$
\STATE $S_0 \leftarrow \mathcal{K}$, \quad $R \leftarrow \lceil \log_\eta(|\mathcal{K}|) \rceil$
\STATE $\hat{\mu}_j \leftarrow \nu_j \;\; \forall j \in \mathcal{K}$, \quad $N_{\text{used}} \leftarrow 0$, \quad $n_{\text{prev}} \leftarrow 0$
\FOR{$r = 0,\dots,R-1$}
    \STATE $n_r \leftarrow \left\lfloor \dfrac{N}{R \cdot |S_r|} \right\rfloor$
    \STATE $N_{\text{used}} \leftarrow N_{\text{used}} + |S_r| \cdot (n_r - n_{\text{prev}})$, \quad $n_{\text{prev}} \leftarrow n_r$
    \FOR{$j \in S_r$}
        \STATE Observe $y_{j,t}$ for $t = n_{\text{prev}}+1, \dots, n_r$; update $\mathcal{D}_j$
        \STATE $\hat{\mu}_{j,r} \leftarrow \mathbb{E}[f_j(B) \mid \mathcal{D}_j]$, \quad
               $\sigma_{j,r}^2 \leftarrow \mathrm{Var}[f_j(B) \mid \mathcal{D}_j]$
    \ENDFOR
    \STATE $j^* \leftarrow \argmax_{j \in S_r} \hat{\mu}_{j,r}$
    \STATE Compute $N_\text{stop}$
    \IF{$N_{\text{used}} \geq N_{\mathrm{stop}}$}
        \STATE \textbf{return} $j^*$
    \ENDIF
    \STATE $S_{r+1} \leftarrow$ top $\left\lceil |S_r| / \eta \right\rceil$ arms from $S_r$ by $\hat{\mu}_{j,r}$
\ENDFOR
\STATE \textbf{return} $j^*$
\end{algorithmic}
\end{algorithm}

\subsection{Hyperband Extension} \label{sec:appendix:hyperband}
To empirically show that our stopping criterion is compatible with Hyperband \citep{li-jmlr18a} and PriorBand, we conduct an empirical evaluation as discussed in \cref{sec:emipirical_results} on LCBench \citep{zimmer-tpami21a}. Importantly, this requires a slight adjustment of the stopping criterion with $\log_\eta(K)$ instead of $\log_2(K)$.
Importantly, the termination of a successive-halving bracket does not impact the following brackets. Upon termination, we also guarantee that the configuration is evaluated on the maximum fidelity, to allow returning the incumbent across all brackets. Note that while we expect an overall prior map to be provided to Hyperband, priors are then created with respect to their own bracket. The pseudocode is provided below, with $CREATE\_HYPERBAND\_BRACKETS$ construction brackets as discussed by \citet{li-jmlr18a}.

Evaluation results can be found in Figure~\ref{fig:hyperband prob}. The PSH stopping criterion was successfully integrated into Hyperband as described in Algorithm~\ref{alg:hyperband_psh}, allowing individual brackets to terminate early once sufficient budget has been consumed relative to $N_\text{stop}$.

Figure~\ref{fig:hyperband prob} shows regret and probability of $\epsilon$-optimal arm identification across prior types, where $\epsilon$-BAI denotes the fraction of runs in which the returned arm falls within an $\epsilon$-ball of the optimal arm. Note that the probability is bounded by the hardness of the problem, not our method.

Rank, performance, and indicator priors achieve comparable regret of ${\approx}0.06$ at substantially lower consumed budget (${\approx}650$--$850$) than inverse rank and uniform (${\approx}1000$--$1100$), confirming that informative priors enable earlier stopping without sacrificing solution quality. Equal to the SH results in Section~\ref{subsec:results}, the rank prior results in the highest budget savings with some distance. The $\epsilon$-BAI results are largely consistent across methods, with the exception of inverse rank which drops to ${\approx}0.12$.

The results demonstrate that PSH's stopping criterion transfers successfully to the Hyperband setting, preserving the same pattern observed in standalone SH of offering substantial budget savings without sacrificing on performance.

\begin{figure}[H]
    \centering
    \begin{subfigure}[t]{0.49\linewidth}
        \centering
        \texttt{Hyperband Regret}
        \includegraphics[width=\linewidth,trim={0cm 0.25cm 0cm 0cm},clip]{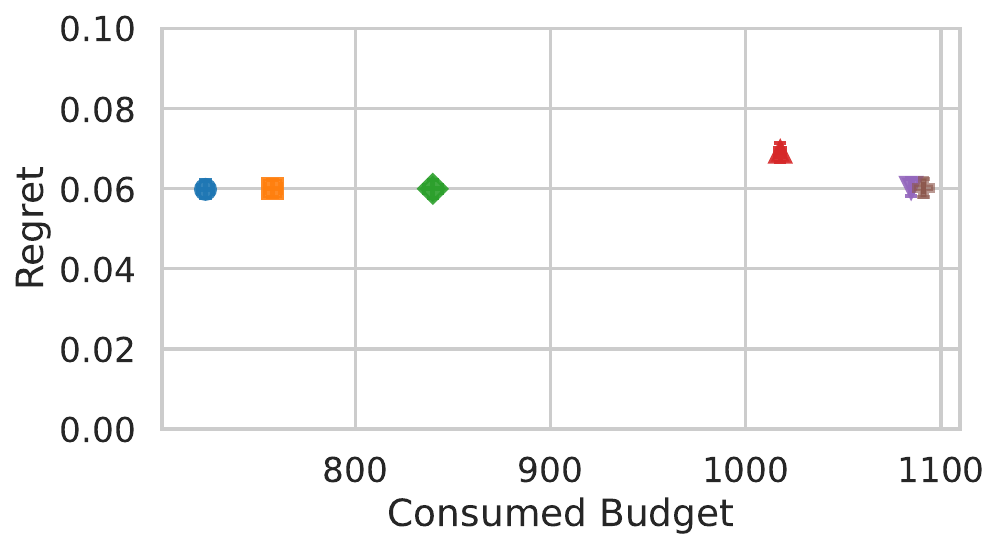}
    \end{subfigure}
    \hfill
    \begin{subfigure}[t]{0.49\linewidth}
        \centering
        \texttt{Hyperband Probability of $\epsilon$-BAI}
        \includegraphics[width=\linewidth,trim={0cm 0.25cm 0cm 0cm},clip]{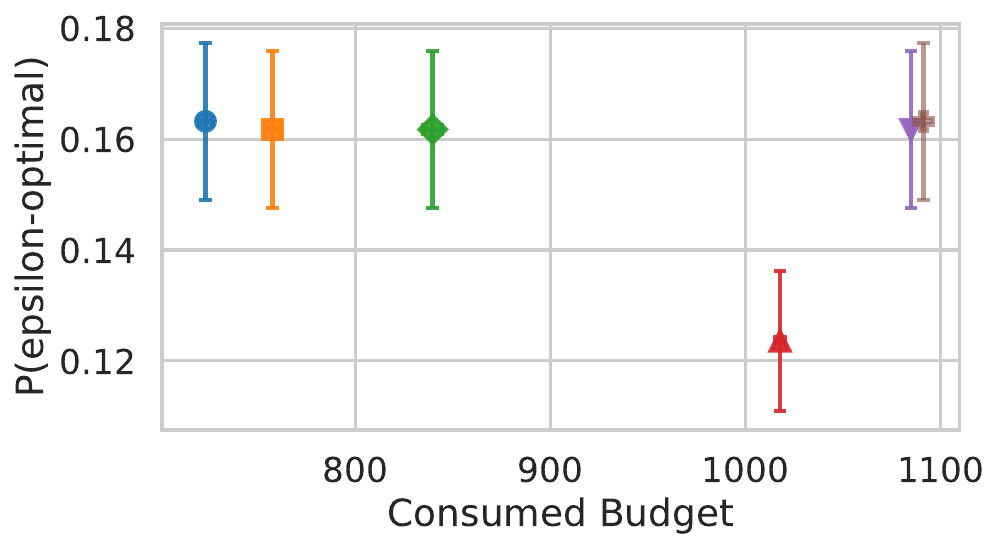}
        \label{fig:lcbench}
    \end{subfigure}
    \includegraphics[width=\linewidth,trim={0cm 0cm 0cm 0.25cm},clip]{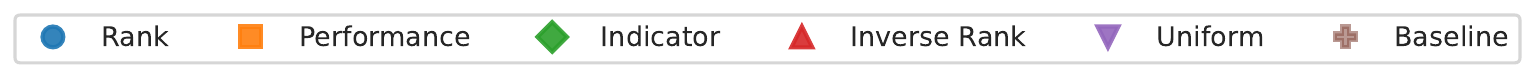}
    \caption{Hyperband performance across prior types on regret (left) and probability of $\epsilon$-optimal arm identification (right), plotted against consumed budget.}
    \label{fig:hyperband prob}
\end{figure}

\begin{algorithm}[ht]
\caption{Hyperband with PSH (Simplified)}
\label{alg:hyperband_psh}
\begin{algorithmic}[1]
\REQUIRE Maximum resource $B$, elimination ratio $\eta$, 
         sampler $\mathcal{P}$, prior map $\pi$,
         learning curve kernel $k$, confidence $\delta$
\STATE $\mathcal{B} \leftarrow \textsc{create\_hyperband\_brackets}
       (B, \eta, \mathcal{P}, \pi)$
\STATE $\mathcal{W} \leftarrow \emptyset$
\FOR{$\bigl(\mathcal{K}_s, N_s, (\nu_j,\sigma_0^2)_j\bigr) \in \mathcal{B}$}
    \STATE $j^*_s \leftarrow \textsc{PSH}\bigl(\mathcal{K}_s,\, N_s,\,
           (\nu_j,\sigma_0^2)_{j \in \mathcal{K}_s},\, B,\, k,\, 
           \eta,\, \delta\bigr)$
           \hfill\COMMENT{Algorithm~\ref{alg:mf_prior_sh}}
    \STATE $\mathcal{W} \leftarrow \mathcal{W} \cup \{j^*_s\}$
\ENDFOR
\STATE \textbf{return} $\arg\max_{j \in \mathcal{W}} \hat{\mu}_{j}(B)$
\end{algorithmic}
\end{algorithm}

\subsection{PriorBand Extension}\label{sec:appendix:priorband}
We also demonstrate compatibility with PriorBand~\citep{mallik-neurips23a}, which combines Hyperband with prior-informed configuration sampling and sequential bracket execution. One key adaptation is required: since PSH's early stopping criterion can substantially reduce the number of configurations evaluated per bracket, we cannot rely on PriorBand's original condition for activating incumbent-based sampling, which requires one configuration to have been evaluated at the highest fidelity. Instead, we activate incumbent-based sampling after the first bracket completes. The same prior type used for PriorBand's configuration sampling is used to assign GP prior means within each PSH bracket, ensuring consistency between the two sources of prior information. Otherwise, we follow the original PriorBand procedure.

To construct the PriorBand prior, we sample $1000$ configurations uniformly, evaluate each at maximum fidelity, and select the best-performing configuration as the prior centre. A Normal distribution is then placed over each hyperparameter centred on this configuration with standard deviation $\sigma = \frac{u - l}{5}$, where $u$ and $l$ are the hyperparameter's upper and lower bounds respectively, yielding a broad but informed prior over the configuration space.

Figure~\ref{fig:priorband} shows regret for PriorBand with the PSH early stopping criterion. The rank prior triggers the earliest stopping (${\approx}830$) but at the cost of slightly higher regret (${\approx}0.03$), while inverse rank and indicator achieve comparable regret of ${\approx}0.02$ at moderate budget (${\approx}975$--$1000$). Performance consumes more budget (${\approx}1050$) and achieves the lowest regret (${\approx}0.017$), with the uninformative uniform prior and no-prior baseline behaving similarly at 1100 iterations and regret of ${\approx}0.015$.

We emphasize that the integration with PriorBand is presented as a demonstration of compatibility rather than a claim of joint optimality. PriorBand's configuration sampling and PSH's arm identification operate on fundamentally different objectives: PriorBand shapes \emph{which} configurations enter the search, while PSH determines \emph{when} sufficient evidence has accumulated to stop evaluating them. Because these objectives are not jointly optimized, a precise theoretical analysis of their interaction is out of scope. The relevant observation is that PSH's stopping criterion transfers gracefully to the PriorBand setting, consistently reducing the consumed budget while the regret-budget tradeoff remains Pareto-competitive across all prior types.

\begin{figure}[H]
    \centering
    \texttt{Priorband with early stopping criterion}
    \includegraphics[width=0.7\linewidth]{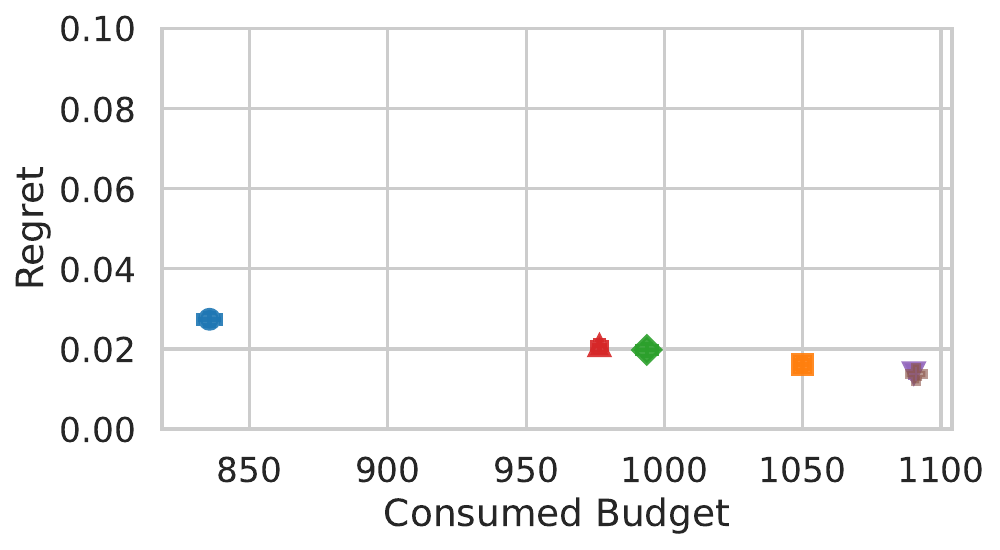}
    \includegraphics[width = \linewidth]{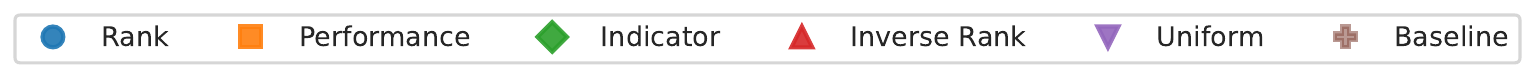}
    \caption{Regret for PriorBand with the PSH early stopping criterion across prior types, plotted against consumed budget.}
    \label{fig:priorband}
\end{figure}

\newpage
\section{Analysis of the Minimum Required Prior Gap}
\label{sec:appendix min prior gap plot}

This section provides a sensitivity analysis of the required prior gap in
\begin{equation}
    (\nu_{j^*}-\nu_{j}) \geq\frac{\sigma_0^2}{\Sigma\Delta_{\epsilon,j}} 
    \left[2\Sigma\ln\!\left(\frac{2R(\frac{K}{2}-1)}{\delta}\right) 
    - j\!\left(\Delta_{\epsilon,j}^2 + 2\ln\!\left(\frac{2}{\delta}\right)\right)\right].
\end{equation}
Figure~\ref{fig:prior gap plots} illustrates how the minimum necessary separation 
between the lead candidate $j^*$ and suboptimal arms $j$ scales with $\delta$, 
$\sigma_0$, and $\epsilon$.

\paragraph{Impact of Confidence ($\delta$)} As $\delta\rightarrow 0$, the required 
prior gap increases logarithmically. Stricter confidence guarantees impose higher 
algorithmic overhead, requiring a stronger prior signal to substitute for extensive 
sampling to rule out failure.

\paragraph{Impact of Prior Uncertainty ($\sigma_0$)} As $\sigma_0$ increases, 
representing a less confident prior, the required gap grows monotonically. The 
algorithm shifts weight toward empirical terms, demanding stronger prior separation 
to achieve the same budget saving.

\paragraph{Impact of Error-Tolerance ($\epsilon$)} As $\epsilon$ approaches zero, 
the required prior gap increases sharply, consistent with the inverse dependence on 
$1/\Delta_{\epsilon,j}$. Relaxed precision requirements or stronger priors thus 
yield the largest efficiency gains.

\begin{figure}[]
    \centering
    \begin{subfigure}[b]{0.6\columnwidth}
        \centering
        \includegraphics[width=\textwidth]{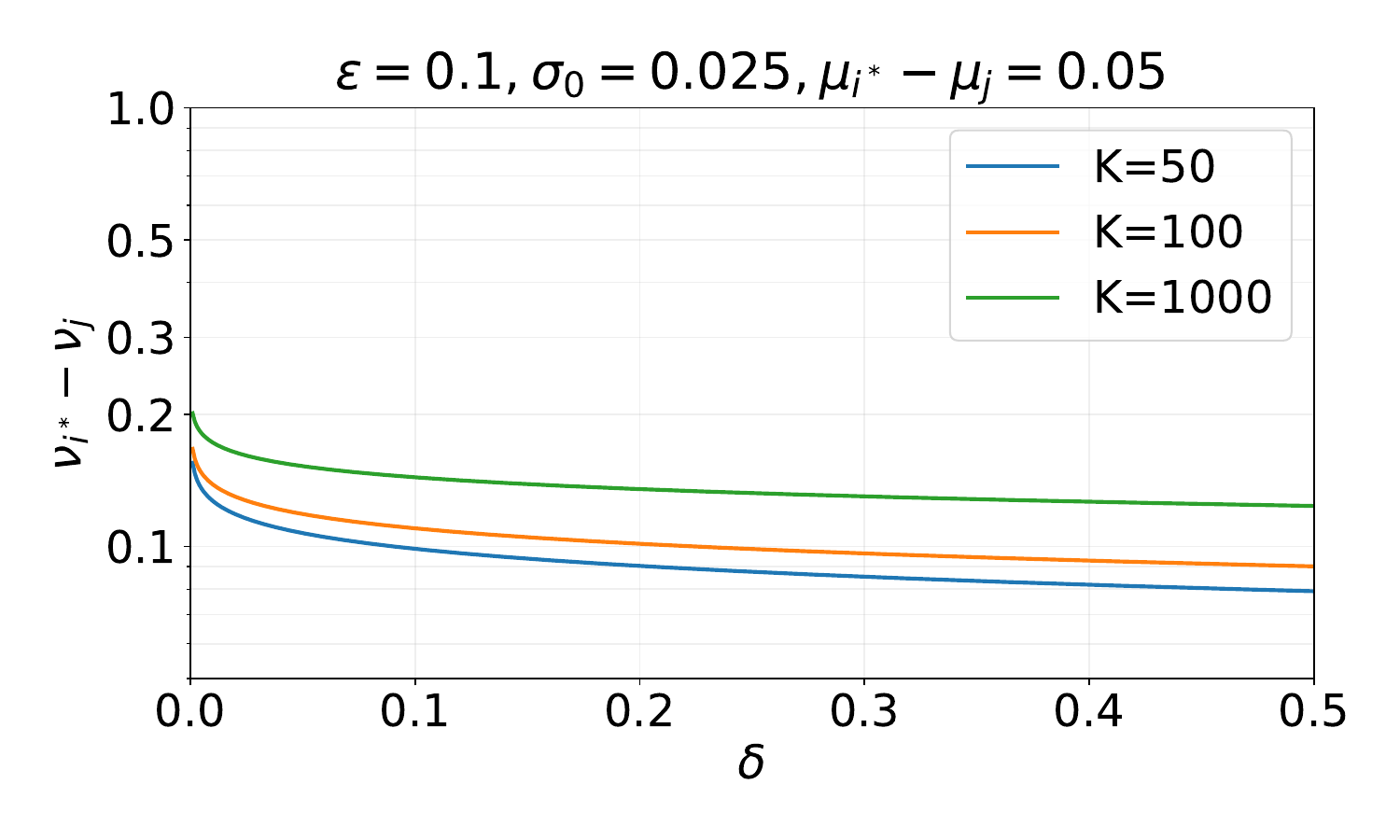}
        \caption{Impact of $\delta$}
        \label{fig:delta}
    \end{subfigure}
    \hfill
    \begin{subfigure}[b]{0.6\columnwidth}
        \centering
        \includegraphics[width=\textwidth]{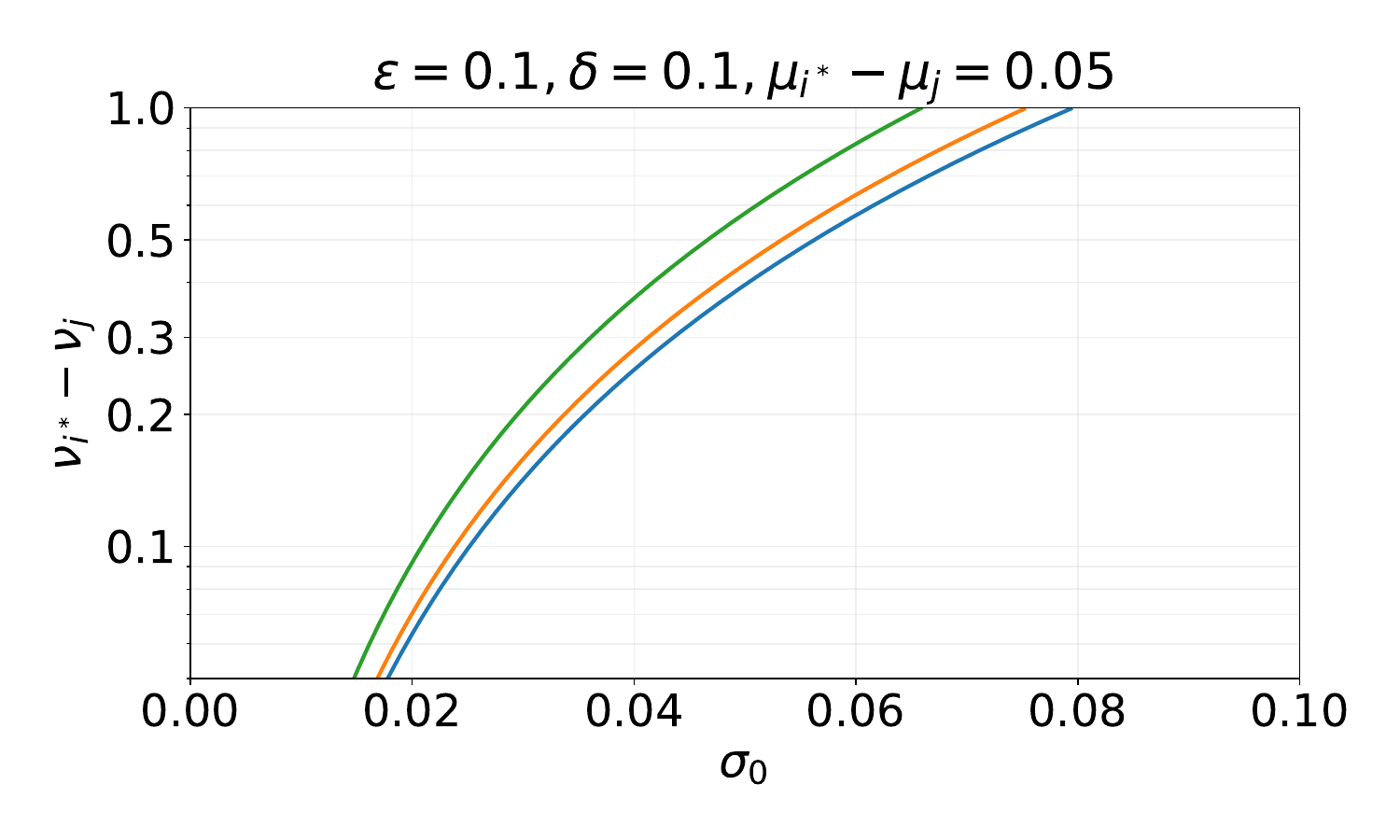}
        \caption{Impact of $\sigma_0$}
        \label{fig:sigma}
    \end{subfigure}
    \begin{subfigure}[b]{0.6\columnwidth}
        \centering
        \includegraphics[width=\textwidth]{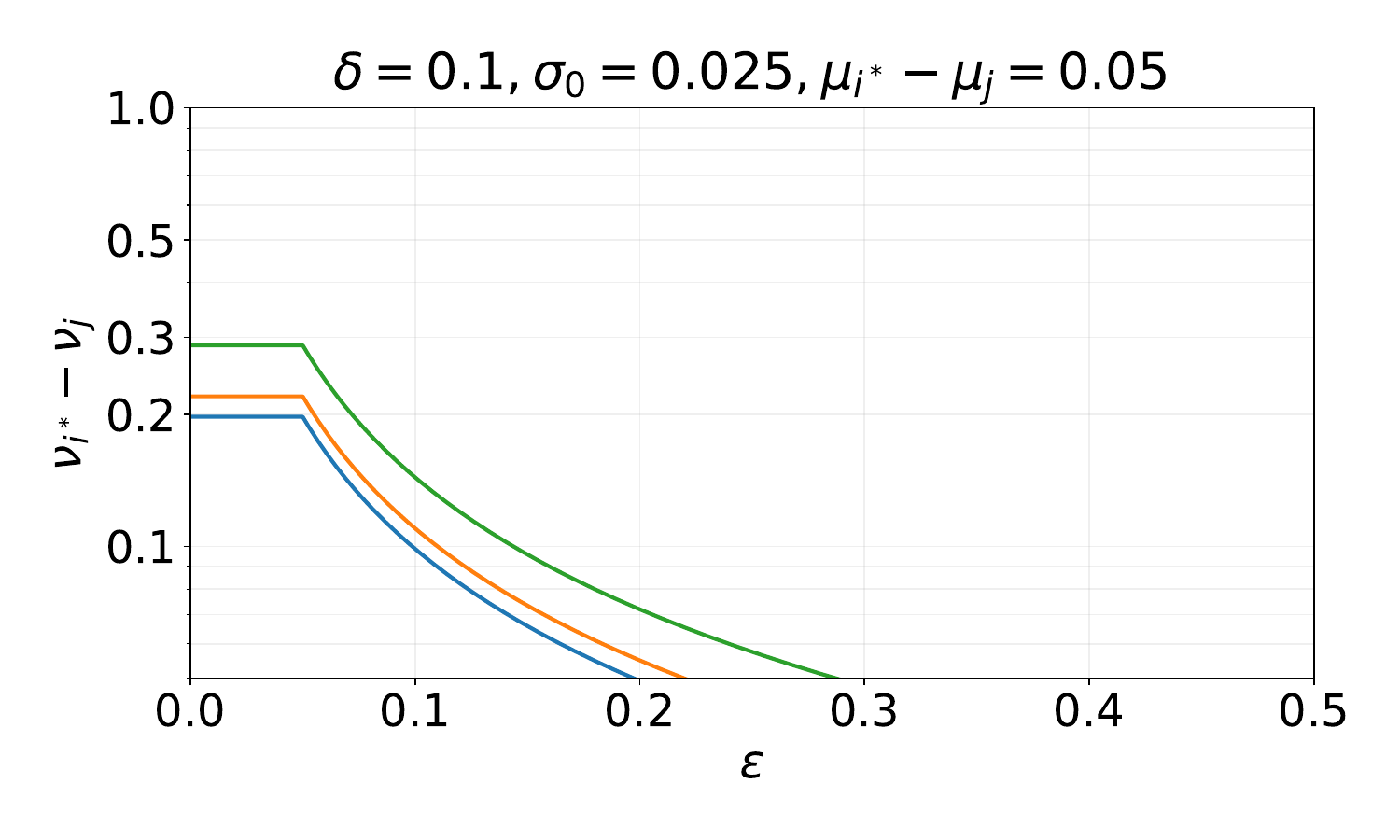}
        \caption{Impact of $\epsilon$}
        \label{fig:epsilon}
    \end{subfigure}
    \caption{Sensitivity analysis of the required prior gap across confidence 
    ($\delta$), uncertainty ($\sigma_0$), and precision ($\epsilon$).}
    \label{fig:prior gap plots}
\end{figure}

\newpage
\section{Further Results}\label{sec:appendix further results}

\subsection{LCBench Ablation with Different Kernels}\label{subsec:appendix kernel ablation}
In Figure \ref{fig:kernel ablation}, we show the LCBench regret distributions and consumed budgets for all prior generation schemes when varying the kernel of the Gaussian Process surrogate. Overall, regret remains low and stable across all kernels, with minor variations per prior scheme. Across all prior types, the linear kernel achieves the best results for rank and performance priors. If evaluated with the other prior kinds, no kernel outperforms the others.
\begin{figure}[H]
    \centering
    \includegraphics[width=\linewidth]{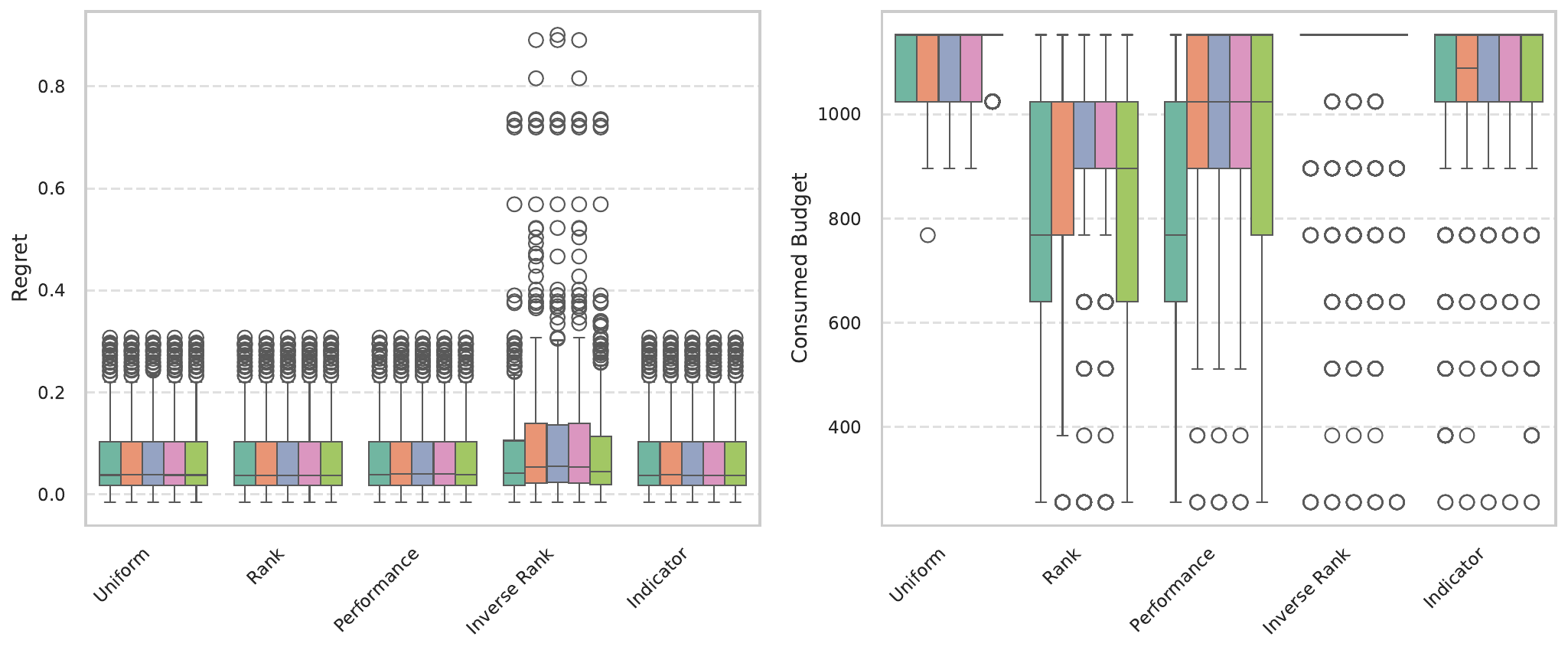}
    \includegraphics[width=0.8\linewidth]{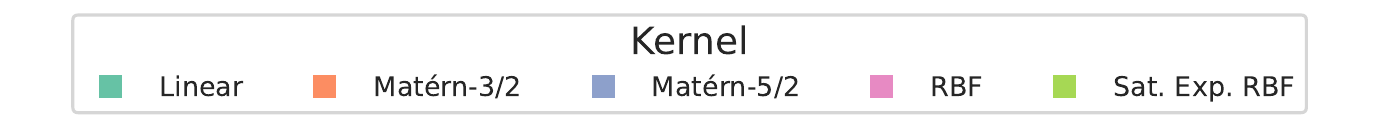}
    \caption{Regret and consumed budget of PSH on LCBench with varying GP surrogate kernels.}
    \label{fig:kernel ablation}
\end{figure}

\subsection{Ablation for Varying Sigma}\label{subsec:appendix sigma ablation}

Figure \ref{fig:sigma ablation} shows the regret distributions and consumed budgets for all five PSH prior generation schemes. On the synthetic benchmark, all prior kinds achieve near-zero regret, with lower regret for smaller $\sigma_0$. Consistent with Figure \ref{fig:empirical_resultsparte_front_lcbench}, rank priors yield a substantial reduction in budget consumption across all $\sigma_0$ values. While this trend also holds for performance priors, they are more sensitive to user uncertainty, as $\sigma_0$ visibly affects their result. Uniform, inverse-rank, and indicator priors do not appear to result in early stopping, as the full budget is executed (with outliers in the case of indicator priors).

LCBench exhibits higher, yet very stable, regret distributions across all priors, with different prior types yielding almost identical regret. Mirroring the synthetic results, rank and performance priors lead to the most consistent budget reductions, with lower $\sigma_0$ again proving beneficial. In order of decreasing effect, indicator, uniform, and even inverse rank priors lead to early termination.
Overall, these results demonstrate that PSH provides consistently strong and budget-efficient performance on both benchmarks, with $\sigma_0$ behaving as intended and only marginally affecting
stability—confirming robustness under varying user uncertainty. Lower uncertainty yields a corresponding speedup, while higher uncertainty makes PSH more cautious.
\begin{figure}[H]
    \centering
    \texttt{Synthetic Results}
    \includegraphics[width=\linewidth,trim={0cm 0cm 0cm 0cm},clip]{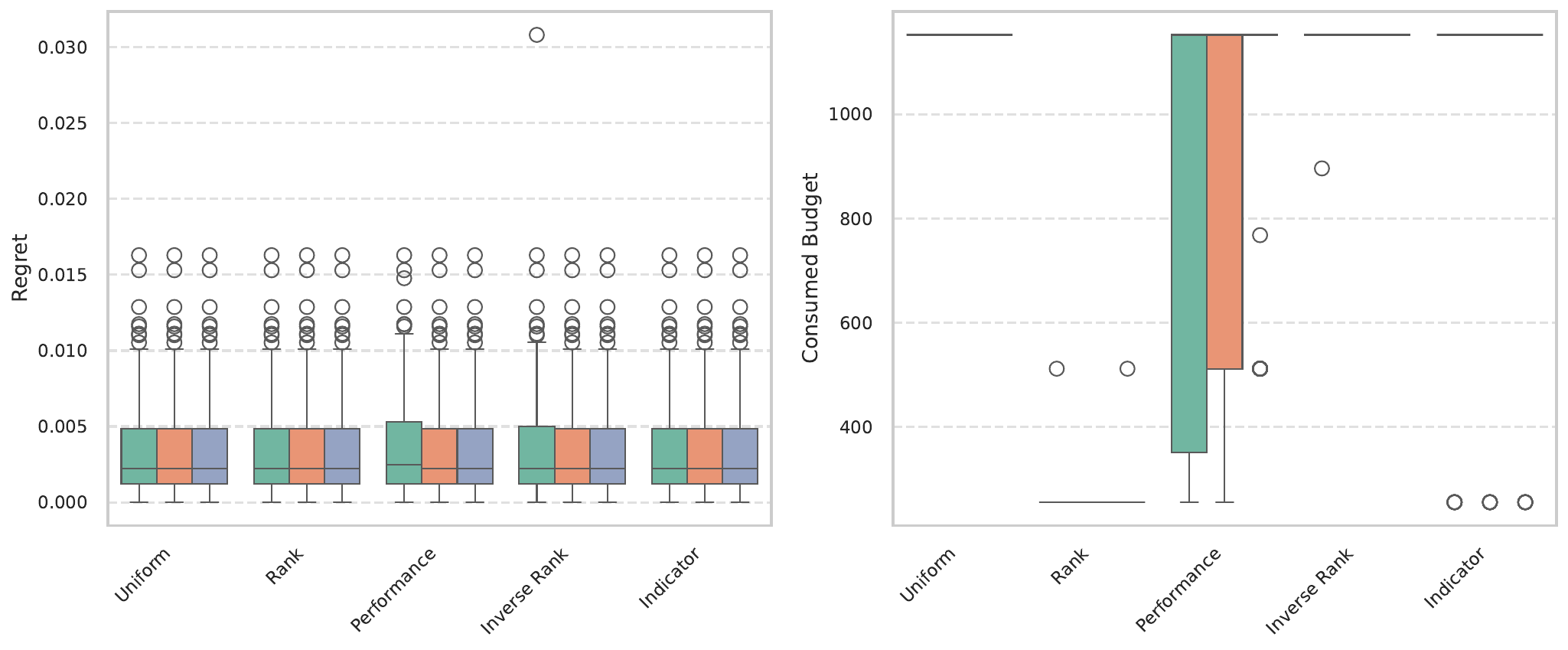}
    \texttt{LCBench Results}
    \includegraphics[width=\linewidth]{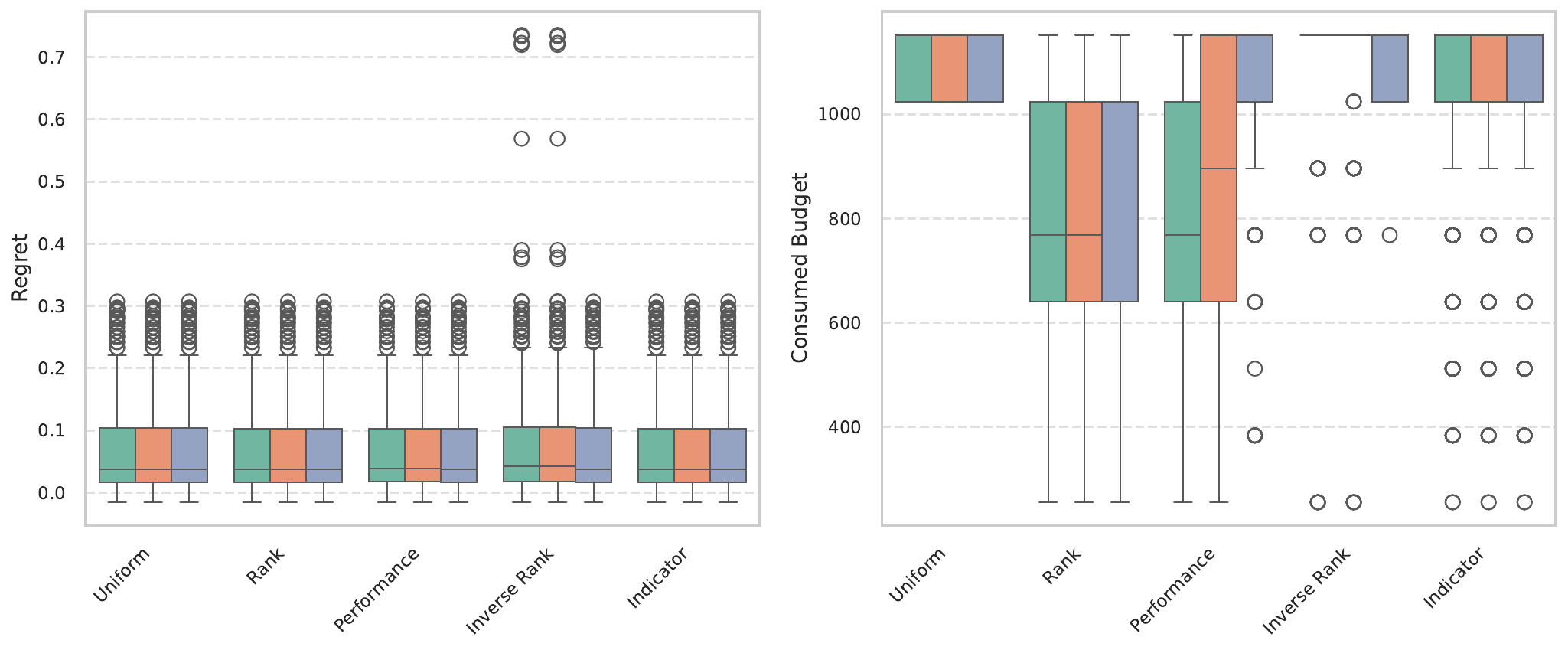}
    \includegraphics[width=0.8\linewidth]{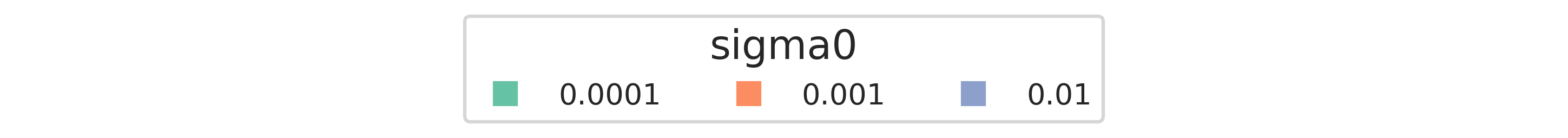}
    \caption{Regret distributions and consumed budgets for all PSH prior generation schemes under varying $\sigma_0$ on the synthetic benchmark (top) and LCBench (bottom).}
    \label{fig:sigma ablation}
\end{figure}

\newpage

\end{document}